\documentclass{article}

\usepackage[preprint,main]{neurips_2026}

\usepackage[utf8]{inputenc}
\usepackage[T1]{fontenc}
\usepackage{url}

\usepackage{amsmath,amssymb,amsthm}
\usepackage{mathtools}

\usepackage{graphicx}
\usepackage{booktabs}
\usepackage{nicefrac}
\usepackage{microtype}
\usepackage{xcolor}
\usepackage{placeins}

\usepackage{hyperref}

\theoremstyle{plain}
\newtheorem{theorem}{Theorem}
\newtheorem{lemma}{Lemma}

\theoremstyle{definition}
\newtheorem{definition}{Definition}

\theoremstyle{remark}
\newtheorem{remark}{Remark}

\newcommand{\vz}{\mathbf{z}}
\newcommand{\ve}{\mathbf{e}}
\newcommand{\valpha}{\boldsymbol{\alpha}}
\newcommand{\vpi}{\boldsymbol{\pi}}
\newcommand{\vp}{\mathbf{p}}
\newcommand{\vy}{\mathbf{y}}

\newcommand{\vtheta}{\boldsymbol{\theta}}
\newcommand{\dgam}{\psi^{(0)}}

\newcommand{\EE}{\mathbb{E}}
\newcommand{\PP}{\mathbb{P}}
\newcommand{\R}{\mathbb{R}}

\DeclareMathOperator{\Dir}{Dir}
\DeclareMathOperator{\KL}{KL}
\DeclareMathOperator*{\argmax}{arg\,max}

\newcommand{\Piop}{\Pi}

\newcommand{\norm}[1]{\left\lVert #1 \right\rVert}
\newcommand{\opnorm}[1]{\left\lVert #1 \right\rVert_{\mathrm{op}}}

\newcommand{\alphazero}{\alpha_0}

\title{Plug-in Losses for Evidential Deep Learning: A Simplified Framework for Uncertainty Estimation that Includes the Softmax Classifier}

\author{
   Berk Hayta \\
  TU Munich \\
   \ \texttt{berkhayta.contact@gmail.com} \\
  \And  
  Hannah Laus \\
   TU Munich \& MCML  \\
  \And
  \qquad \qquad Simon Mittermaier \\
  \qquad \qquad Infineon Technologies \\
  \And
  \qquad Felix Krahmer \\
  \qquad TU Darmstadt, TU Munich \& MCML 
}

\begin{document}
\maketitle

\begin{abstract}
Real-world sensor-based learning systems require uncertainty estimation that is both reliable and computationally efficient.
Evidential Deep Learning (EDL) provides single-pass uncertainty estimation
by modeling the class probabilities via Dirichlet distributions, where the
Dirichlet parameters are predicted by a learned neural network mapping. However, this approach can lead to computational challenges, as
Dirichlet expected objectives are more complex than standard supervised learning 
losses, complicating their analysis and implementation. We address this issue by approximating the objective of the first-order empirical risk minimization problem induced by EDL with a plug-in loss
evaluated at the Dirichlet mean and show that, under mild assumptions, the approximation error decays with growing evidence for a broad class of loss functions, including mean-squared error and cross-entropy loss. 
As a special case, our analysis provides justification for the use of softmax in the context of uncertainty estimation, since under a particular evidence-to-Dirichlet mapping, our framework includes the standard softmax classifier.
We validate the proposed simplified objectives on the Google Speech Commands dataset and show that they achieve predictive accuracy
and selective prediction performance comparable to classical EDL, while
being simpler to implement using standard deep learning losses and
training pipelines.  To the best of our knowledge, this empirical analysis is the first to obtain coverage-accuracy trade-offs for speech recognition tasks through EDL.
\end{abstract}

\section{Introduction}
Deep neural networks are increasingly deployed in resource-constrained and
safety-critical settings, where predictive accuracy alone is often
insufficient. Overconfident predictions may lead to undesirable or unsafe
system behavior, motivating methods that quantify predictive uncertainty
and enable selective prediction. At the same time, many established approaches for uncertainty estimation come with significant computational overhead, e.g., Monte Carlo dropout (\cite{gal2016dropout}), Bayesian neural networks (\cite{neal2012bayesian,blundell2015weight}) and Deep ensembles (\cite{lakshminarayanan2017simple,wilson2020bayesian}). 
This can make them unsuitable for real-time applications such as embedded
sensing, where uncertainty estimation methods should add only minimal
inference-time overhead; moreover, many of these methods require specific
training procedures, making them less straightforward to integrate into
existing pipelines.

Evidential Deep Learning (EDL) (\cite{Sensoy}) aims to address these issues by enabling
uncertainty estimation in a single forward pass while remaining
compatible with standard neural network training. The idea of EDL is not to learn only a single prediction, but rather to predict parameters of a Dirichlet distribution modeling the class probabilities.

Despite promising
empirical results for controlled benchmark settings (\cite{Sensoy,deng2023uncertainty}), there remains a gap between the theoretical foundations of this uncertainty-aware learning strategy and the algorithmic implementation. 
In particular, the theory-based loss function of EDL is based on an explicit formula for the 
expected loss over the unknown Dirichlet distribution to be estimated. 

Compared with ordinary supervised losses, the resulting
objectives are more complex because they depend on both the projected class
probabilities and the total Dirichlet concentration. This additional structure
can make the optimization landscape harder to interpret and may contribute to
sensitivity in hyperparameter choices, slow convergence, or convergence to poor
local minima. This motivates a closer analysis of the empirical-risk structure
induced by EDL and the plug-in simplifications developed below.

In this work, we propose a remedy for these concerns by working with an approximation to the EDL loss function that is closer to the loss without EDL, while also retaining the uncertainty information provided by its minimizer.

Thus, the resulting uncertainty-aware learning problem can be
optimized using simpler supervised-learning objectives with efficient implementations and well-understood training behavior.

To illustrate this, we validate the resulting simplified objectives on the Google Speech Commands v1 dataset (\cite{warden2018speechcommands}).  As we demonstrate in theoretical derivations and numerical studies, the resulting uncertainty-aware learning methods exhibit comparable operational metrics to EDL.

More precisely, we proceed via a Taylor expansion of the Dirichlet expected loss around the Dirichlet mean. The first-order term vanishes in expectation, and higher terms decay with growing evidence, which establishes that evaluating the loss at the Dirichlet mean yields a good approximation of the EDL objective. As a special case, the softmax classifier belongs to this class of simplified evidential classifiers.

For the Google Speech Commands v1 dataset, we estimate uncertainty
distributions for correct and incorrect predictions and evaluate the resulting
coverage--accuracy trade-off. By thresholding an uncertainty score, the
classifier can withhold predictions for uncertain samples, increasing accuracy
on the accepted samples at the cost of reduced coverage. Both EDL and the
proposed approximate EDL objectives, including the softmax case, provide such
uncertainty scores.

\paragraph{Contributions.}
The contributions of this work are as follows:

\begin{itemize}

 \item In this paper, we derive that the evidential deep learning framework (EDL) can be simplified by working with plug-in losses that approximate the EDL objective proposed in \cite{Sensoy}. 
 
 \item We rigorously show that, for growing evidence, the EDL objective is well approximated by the proposed plug-in loss and quantify the approximation error.
 \item  The resulting classification framework includes a variant of the classical softmax classifier, hence our analysis provides a first step towards understanding the performance of softmax for uncertainty estimation in the EDL framework.

\item We apply the resulting uncertainty estimation framework for a realistic keyword spotting task on Google Speech Commands v1 and find that it performs comparably to the original EDL formulation, despite its simpler implementability. 

\item We illustrate the benefits of the resulting uncertainty estimates by evaluating operational uncertainty through fixed
high-reliability operating points, coverage--accuracy trade-offs, and
uncertainty distributions for correct and incorrect predictions.  To the best of our knowledge, this is the first uncertainty analysis in the spirit of EDL in the speech recognition context.

\end{itemize}
\section{Related work}

\paragraph{Evidential Deep Learning}

Evidential deep learning was introduced by \cite{Sensoy} for classification and is inspired by Dempster-Shafer theory (\cite{dempster1968generalization,shafer1990perspectives,shafer1976mathematical}) and subjective logic (\cite{josang2001logic,jsang2018subjective}).
In recent years many refinements and extensions of EDL have been proposed (\cite{ulmer2023prior, gao2025edl_survey}). For example,
R-EDL (\cite{R_EDL}) replaces the fixed prior-weight choice in the
subjective-logic parametrization of classical EDL with a tunable
hyperparameter, and separately simplifies the original EDL mean-squared-error
objective by removing its variance term. I-EDL (\cite{deng2023uncertainty}) adds an extra Fisher regularization term to the loss.

Re-EDL (\cite{Re_EDL}) and \cite{EDL_Mirage} both argue that for improved out-of-distribution detection one can leave out the KL-divergence term. 

Besides advancements in EDL, there is also a line of work criticizing EDL
(\cite{bengs2022pitfalls,jurgens2024epistemic,EDL_Mirage}).
\cite{jurgens2024epistemic} shows that EDL does not model epistemic
uncertainty in the formal Bayesian sense, since its uncertainty does not
necessarily vanish with infinite training data and is highly dependent on
hyperparameter choices. On the other hand, \cite{EDL_Mirage} shows that EDL can
perform well for out-of-distribution detection, even if it should not be
interpreted as Bayesian uncertainty quantification. These observations motivate
a direct study of the Dirichlet-expected losses used in classification EDL. In
particular, our work analyzes the relation between losses evaluated under the
Dirichlet expectation and losses evaluated at the Dirichlet mean.

In a parallel line of work, \cite{malinin2018predictive} introduced Prior Networks which also use Dirichlet-output classifiers. The method of \cite{malinin2018predictive} requires OOD samples for training which is not available in many applications but other works in that line also get away without OOD samples (\cite{tsiligkaridis2021information,haussmann2020bayesian}). Furthermore, there are works on EDL for regression (\cite{amini2020deep,malinin2020regression}).
In this work, we focus on classification EDL as introduced in \cite{Sensoy},
while noting that related Dirichlet-output classifiers have also been studied
in the Prior and Posterior Network literature (\cite{ulmer2023prior}).

\paragraph{Softmax confidence and deterministic uncertainty}
\cite{holm2023revisiting} analyze the empirical uncertainty-estimation properties of softmax confidence compared to MC dropout for text classification and find that it performs competitively. In contrast, \cite{ovadia2019can} presents a large-scale empirical study showing that deterministic confidence measures, such as softmax, fail on certain tasks under distribution shifts and can display misleading calibration behavior.

\paragraph{Speech command recognition}
Speech Command Recognition, often also called Keyword Spotting or Wake-Word Detection, is a subtask within the field of Speech Recognition with a focus on detecting certain words or phrases. The most common benchmark for this task is derived from the Google Speech Commands v1 dataset, described in \cite{warden2018speechcommands}. In practice, this task is often performed with limited compute and power resources, often in battery-powered settings.
\cite{majumdar2020matchboxnet} presents a model architecture optimized not only for prediction accuracy but also for efficient inference. We use this model as the basis for our experiments and, to the best of our knowledge, are the first to showcase evidential deep learning in this realistic and impactful application.

\section{Preliminaries}
\subsection{Problem setup}

We consider a supervised multiclass classification problem with $K$
classes. Let $\mathcal{X} \subseteq \R^d$ denote the input space
and let $\mathcal{Y} = \{1,\dots,K\}$ denote the label space. Data are
drawn from an unknown distribution $\PP_{XY}$ over
$\mathcal{X}\times\mathcal{Y}$, and we observe an i.i.d.\ sample
$\{(x_i,y_i)\}_{i=1}^n$.
Let $f_{\vtheta}:\mathcal{X}\to\R^K$ be a parametric model with
parameters $\vtheta\in\Theta$, and let
$\vz=f_{\vtheta}(x)\in\R^K$ denote the output logits. Predictions are
obtained by composing $f_{\vtheta}$ with a mapping
$\hat{\vp}:\R^K\to\Delta^{K-1}$ into the probability simplex.
Throughout, class labels may be identified with their one-hot
representations when convenient. 
Given a loss function
$
\ell:\Delta^{K-1}\times\mathcal{Y}\to\R_+,
$
the population and empirical risks are defined as
\begin{gather}
\mathcal R(\vtheta)
=
\EE_{(X,Y)\sim\PP_{XY}}
\big[
\ell(\hat{\vp}(f_{\vtheta}(X)),Y)
\big],
\qquad
\widehat{\mathcal R}_n(\vtheta)
=
\frac1n\sum_{i=1}^n
\ell(\hat{\vp}(f_{\vtheta}(x_i)),y_i).
\end{gather}
The goal of a learning problem is to minimize the empirical risk, i.e.,
$
\vtheta^\star \in
\arg\min_{\vtheta\in\Theta}
\widehat{\mathcal R}_n(\vtheta),
$
which means that model predictions are learned by minimizing a loss over the training data. This is called empirical risk minimization (\cite[Chapter 4.2]{mohri2018foundations}).

\subsection{Evidential Deep Learning}
\label{sec:edl}

We consider the Evidential Deep Learning (EDL) framework for
classification introduced by \cite{Sensoy}. Throughout, we refer to the
original formulation as \emph{classical EDL} to distinguish it from the
simplified formulations introduced later.

For a loss \(\ell:\Delta^{K-1}\times\mathcal Y\to\R_+\), the classical
Dirichlet-expected EDL objective is
\begin{gather}
\label{eq:edl_objective}
\EE_{(X,Y)\sim\PP_{XY}}
\Big[
\EE_{\vpi \sim \Dir(\valpha_{\vtheta}(X))}
\big[
\ell(\vpi,Y)
\big]
\Big].
\end{gather}
The neural network enters this objective through the Dirichlet-parameter map
\(x\mapsto \valpha_{\vtheta}(x)\). Concretely, it outputs logits
$\vz=f_{\vtheta}(x)\in\R^K$, which are mapped to componentwise evidence values
$\ve=\tau(\vz)$, where $\tau$ is typically monotone increasing
(e.g., exponential or softplus function).
The evidence is then mapped to Dirichlet parameters
\(\valpha=\phi(\ve)\). The classical choice is
\(\alpha_i=e_i+1\), while we allow componentwise maps
\(\phi:\R_{\ge0}^K\to\R_{>0}^K\).  Predictions are obtained from the Dirichlet
mean
\begin{gather}
\hat{\vp}(\vz)=\Piop(\valpha(\vz))
=\frac{\valpha(\vz)}{\alpha_0(\vz)},
\qquad
\alpha_0(\vz)=\sum_{j=1}^K\alpha_j(\vz).
\end{gather}
The hard decision satisfies
\(
\hat y(\vz)=\argmax_i \hat p_i(\vz)
=\argmax_i \alpha_i(\vz)
=\argmax_i e_i(\vz).
\)

Since classical EDL is usually specified through the training objective, we
separate out the underlying logits-to-Dirichlet-parameter map and call this
map the classical evidential classifier.
\begin{definition}[Classical evidential classifier]
\label{def:classical-evidential-classifier}
Using the mappings above, a classical evidential classifier is the logits-to-Dirichlet-parameter map
\begin{gather}
\vz \mapsto \valpha(\vz)=\phi(\tau(\vz)).
\end{gather}
\end{definition}
We use two classical EDL losses from \cite{Sensoy}. Let
\(\vy\in\{0,1\}^K\) denote the one-hot encoded target label. The
cross-entropy objective is written as the Dirichlet expectation
\begin{gather}
\label{eq:CE_EDL_loss}
L_{\mathrm{CE}}^{\mathrm{EDL}}(\valpha,\vy)
=
\EE_{\vpi\sim\Dir(\valpha)}
\!\left[
-\sum_{k=1}^K y_k \log \pi_k
\right]
=
\sum_{k=1}^K y_k
\left(
\dgam(\alpha_0)-\dgam(\alpha_k)
\right),
\end{gather}
where $\dgam$ denotes the digamma function. The evidential mean-squared-error
loss is
\begin{gather}
\label{eq:MSE_EDL_loss}
L_{\mathrm{MSE}}^{\mathrm{EDL}}(\valpha,\vy)
=
\sum_{k=1}^K
\left(y_k-\hat p_k\right)^2
+
\sum_{k=1}^K
\frac{\hat p_k(1-\hat p_k)}{\alpha_0+1}.
\end{gather}

Classical EDL is commonly trained with an additional KL regularizer that
penalizes evidence assigned to incorrect classes. Following \cite{Sensoy}, the
regularized loss is
\begin{gather}
\ell_{\mathrm{EDL}}^{\mathrm{reg}}(\valpha,\vy)
=
\ell_{\mathrm{EDL}}(\valpha,\vy)
+
\lambda_t
\KL\!\left(
\Dir(\tilde{\valpha})
\,\middle\|\,
\Dir(\mathbf{1})
\right),
\qquad
\tilde{\valpha}=\vy+(1-\vy)\odot\valpha.
\end{gather}
\section{Simplified Evidential Deep Learning}
\label{sec:simplified_edl}
\subsection{A first-order ERM view of classical EDL}
\label{sec:deterministic_erm}
Before simplifying classical EDL, we first show that it can be viewed within the standard first-order ERM framework.
Classical EDL objectives are commonly written as nested expectations.
However, the inner Dirichlet distribution is fully determined by the
model output through the mapping \(\vz \mapsto \valpha(\vz)\). In practical
implementations, the inner expectation $\EE_{\vpi \sim \Dir(\valpha(\vz))}
\big[
\ell(\vpi,\vy)
\big]$ is reduced to a closed-form
expression before training (e.g. Eq. \ref{eq:CE_EDL_loss} or Eq. \ref{eq:MSE_EDL_loss}), yielding the scalar loss used for
gradient-based optimization.
Accordingly, the Dirichlet expectation defines a deterministic
transformation of the base loss \(\ell\), yielding the induced loss
\(\ell_{\mathrm{EDL}}\) on the model output.

\begin{definition}[Induced evidential loss]
Let $\ell : \Delta^{K-1} \times \mathcal{Y} \to \R_+$ be a base loss. Then the induced evidential loss associated with classical EDL is defined by
\begin{gather}
\label{eq:edl_deterministic_loss}
\ell_{\mathrm{EDL}}(\vz,\vy)
:=
\EE_{\vpi \sim \Dir(\valpha(\vz))}
\big[
\ell(\vpi,\vy)
\big].
\end{gather}
\end{definition}

Because the Dirichlet parameters are fully determined by the model
output, optimization remains over the model parameters \(\vtheta\)
alone. The inner expectation therefore does not introduce an additional
optimization variable, but instead defines the scalar loss
\(\ell_{\mathrm{EDL}}\) used in empirical risk minimization.
Classical EDL can therefore be analyzed within the standard first-order
empirical risk minimization framework using the induced loss
\(\ell_{\mathrm{EDL}}\).
\subsection{Simplifying Dirichlet-expected evidential losses}
\label{sec:losses}
Following the first-order ERM view in Section~\ref{sec:deterministic_erm},
we study the per-sample losses induced by the inner Dirichlet expectation
and show that they are well approximated by plug-in losses evaluated at the
projected probabilities.

\begin{definition}[Plug-in evidential loss]
\label{def:plugin-evidential-loss}
Let
$\ell : \Delta^{K-1}\times\mathcal{Y}\to\R_+ $
be a classification loss on the probability simplex. The corresponding plug-in evidential loss is defined by
\begin{gather}
\ell_{\mathrm{plug}}(\valpha,\vy)
=
\ell(\Piop(\valpha),\vy),
\qquad
\Piop(\valpha)=\frac{\valpha}{\alphazero},
\qquad
\alphazero=\sum_{j=1}^K \alpha_j.
\end{gather}
\end{definition}
Thus, the parameters \(\valpha\) affect the objective only through the
projected probabilities \(\Piop(\valpha)\). Consequently, for a parametric model \(x\mapsto \valpha_{\vtheta}(x)\), the resulting expected-risk objective is
\begin{gather}
\EE_{(X,Y)\sim\PP_{XY}}
\big[
\ell(\Piop(\valpha_{\vtheta}(X)),Y)
\big],
\end{gather}
replacing the Dirichlet-expected loss used in classical EDL.
For example we can write the cross-entropy loss and the mean squared error loss in its plug-in formulation
\begin{gather}
\ell^{\mathrm{CE}}_{\mathrm{plug}}(\valpha,\vy)
=
-\log \Piop(\valpha)_y,
\qquad
\ell^{\mathrm{MSE}}_{\mathrm{plug}}(\valpha,\vy)
=
\|\Piop(\valpha)-\vy\|_2^2.
\end{gather}
The plug-in cross-entropy objective also appears in \cite{Sensoy}
through a Type-II maximum likelihood derivation, while the plug-in MSE
corresponds to the classical EDL MSE objective (Eq. \ref{eq:MSE_EDL_loss}) without its Dirichlet variance
term, as also considered in R-EDL (\cite{R_EDL}). In our formulation, both arise from the
same construction: replacing the Dirichlet-expected loss by
a classification loss on the probability simplex evaluated at the projected probabilities. More generally, any loss on the probability simplex induces a simplified evidential objective via
composition with \(\Piop\), avoiding explicit Dirichlet expectations while
preserving the classifier output \(\Piop(\valpha)\).

\subsection{Simplified evidential classifiers}
\label{sec:simplified_classifiers}
To make predictions explicit, we define a simplified evidential classifier
whose decisions depend only on \(\hat{\vp}(\vz)\), in contrast to the
classical evidential classifier in
Definition~\ref{def:classical-evidential-classifier}.

\begin{definition}[Simplified evidential classifier]
\label{def:simplified-evidential-classifier}
A \emph{simplified evidential classifier} is the induced probability map
\begin{gather}
\vz \mapsto \hat{\vp}(\vz)=\Piop(\valpha(\vz))
=
\Piop(\phi(\tau(\vz))),
\end{gather}
used with learning objectives that depend only on
\(\hat{\vp}(\vz)\), rather than directly on the intermediate parameter vector
\(\valpha(\vz)\).
\end{definition}

\paragraph{Interpretation.}
The distinction between Definitions \ref{def:classical-evidential-classifier} and \ref{def:simplified-evidential-classifier} is not the predictive map
itself, but the role assigned to the intermediate vector \(\valpha\)
during training. In classical EDL, \(\valpha\) enters the objective
through Dirichlet-based losses or regularization terms. Under plug-in
objectives, \(\valpha\) acts only as a deterministic parametrization whose
projection \(\hat{\vp}(\vz)\) determines both prediction and optimization.
Thus, for plug-in objectives, the optimization problem can be
studied through the induced probability map without relying on the full
Dirichlet-expected loss structure.

\begin{remark}
Our definition of simplified evidential classifiers allows the additive
constant \(c\ge 0\). The case \(c=0\) is included intentionally. Since the
evidence map satisfies \(e_i>0\), the parameters \(\alpha_i=e_i\) still define
a valid Dirichlet distribution.
This case should not be confused with the Subjective Logic interpretation
associated with classical EDL (~\cite{Sensoy,R_EDL}). Classical EDL uses
\(\alpha_i=e_i+1\), for which \(K/\alpha_0\) corresponds to the usual vacuity
mass. When \(c=0\), the fixed additive prior mass is absent, and
\(K/\alpha_0\) is no longer guaranteed to be bounded by one. We therefore
interpret \(K/\alpha_0\) for \(c=0\) as an inverse-concentration diagnostic
rather than as standard Subjective-Logic vacuity.
\end{remark}

\subsection{Softmax models as Simplified evidential classifiers}
\label{sec:softmax}

Simplified evidential classifiers are defined through their
logits-to-probabilities map. More broadly, any output layer that maps logits
to positive class scores and then normalizes them fits this definition. The
softmax classifier is a canonical example.

\begin{theorem}[Softmax as a simplified evidential classifier]
\label{thm:softmax_simplified}
Any neural network equipped with a softmax output layer defines a
simplified evidential classifier in the sense of
Definition~\ref{def:simplified-evidential-classifier}.
\end{theorem}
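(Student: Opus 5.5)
The plan is to exhibit explicit choices of the evidence map $\tau$ and the evidence-to-Dirichlet map $\phi$ for which the composite map $\vz\mapsto\Piop(\phi(\tau(\vz)))$ of Definition~\ref{def:simplified-evidential-classifier} equals the softmax map $\vz\mapsto \big(\exp(z_i)/\sum_{j=1}^K \exp(z_j)\big)_{i=1}^K$. The natural choice is $\tau(\vz)=\exp(\vz)$, applied componentwise. This is monotone increasing, as required in Section~\ref{sec:edl}, and yields strictly positive evidence $e_i=\exp(z_i)>0$. For $\phi$ I would take the identity, $\alpha_i=e_i$, which is the case $c=0$ of the additive map $\alpha_i=e_i+c$ allowed in the remark following Definition~\ref{def:simplified-evidential-classifier}. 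Since $e_i>0$, we get $\phi:\R_{\ge0}^K\to\R_{>0}^K$ on the relevant range, so $\Dir(\valpha)$ is a valid Dirichlet distribution.

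Next I would compute the composite map directly:
\begin{gather*}
\alpha_0(\vz)=\sum_{j=1}^K \exp(z_j),
\qquad
\Piop(\valpha(\vz))_i=\frac{\exp(z_i)}{\sum_{j=1}^K\exp(z_j)}=\mathrm{softmax}(\vz)_i .
\end{gather*}
Hence, for any network $f_{\vtheta}$ with a softmax output layer, writing its pre-softmax outputs as logits $\vz=f_{\vtheta}(x)$ gives $\hat{\vp}(\vz)=\mathrm{softmax}(\vz)$. Its hard decision $\argmax_i z_i$ coincides with $\argmax_i e_i$, as required.

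The remaining point, and the only real obstacle, is the second clause of Definition~\ref{def:simplified-evidential-classifier}: the map must be used with objectives depending only on $\hat{\vp}(\vz)$. The standard softmax training loss, cross-entropy $-\log \mathrm{softmax}(\vz)_y$, is exactly the plug-in loss $\ell^{\mathrm{CE}}_{\mathrm{plug}}(\valpha,\vy)=-\log\Piop(\valpha)_y$ of Definition~\ref{def:plugin-evidential-loss}, and the same holds for squared error on softmax outputs. Any loss $\ell$ on the simplex applied to the softmax output is $\ell_{\mathrm{plug}}$ by construction. So softmax training falls into the plug-in framework, and the intermediate $\valpha=\exp(\vz)$ acts only as a deterministic parametrization.

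I would close with two remarks. First, the representation is not unique: positive rescalings of $\phi$ give the same $\Piop$. Second, with $c=0$, $K/\alpha_0$ should be read as an inverse-concentration diagnostic rather than Subjective-Logic vacuity, consistent with the earlier remark.
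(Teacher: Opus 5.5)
Your proof is correct and is essentially the paper's own argument: take $\tau(\vz)=\exp(\vz)$ and $\phi(\ve)=\ve$ (the $c=0$ case), so that $\Piop(\valpha(\vz))$ is exactly the softmax map. Your treatment of the loss clause goes a little beyond the paper, which regards the theorem as purely structural and only remarks afterwards that softmax networks fit the framework whenever the objective depends only on $\hat{\vp}(\vz)$; this condition excludes, for example, the paper's Softmax+EDL-CE variant.
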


\begin{proof}
Let \(\vz\in\R^K\) denote the network logits. Choose
\(\tau(\vz)=\exp(\vz)\) and \(\phi(\ve)=\ve\). Then \(\alpha_i(\vz)=\exp(z_i)\), and therefore
\(
\Piop(\valpha(\vz))_i
=
\frac{\alpha_i(\vz)}{\sum_{j=1}^K \alpha_j(\vz)}
=
\frac{\exp(z_i)}{\sum_{j=1}^K \exp(z_j)},
\)
which is exactly the softmax mapping.
\end{proof}

Theorem~\ref{thm:softmax_simplified} is structural: it concerns only the
classifier parametrization and makes no assumption on the training loss.
Thus, whenever optimization depends only on the predicted
probabilities, standard softmax networks belong naturally to the simplified evidential framework.

\subsection{Plug-in approximation for smooth losses}

We can now derive bounds showing that the discrepancy between
\(\ell_{\mathrm{EDL}}(\valpha,\vy)\) and the plug-in loss
\(\ell_{\mathrm{plug}}(\valpha,\vy) \) is controlled by the concentration parameter
\(\alphazero\). This provides theoretical support for simplified
evidential objectives depending only on projected probabilities. 
We consider sufficiently smooth losses that are twice continuously
differentiable on the relevant region of the simplex, and we obtain the plug-in approximation
by a first-order expansion around the Dirichlet mean.

\begin{theorem}[Explicit first-order expansion]
\label{prop:explicit_plugin}
Assume that for each label \(\vy\), the loss
\(\ell(\cdot,\vy):\Delta^{K-1}\to\R\) is twice continuously
differentiable on a neighborhood of the relevant region of the simplex,
with bounded and locally Lipschitz Hessian. Let
\(\vpi\sim\Dir(\valpha)\), and let $\hat{\vp}=\Piop(\valpha)$. Then
\begin{gather}
\ell_{\mathrm{EDL}}(\valpha,\vy)
=
\ell_{\mathrm{plug}}(\valpha,\vy)
+
R(\valpha,\vy),
\end{gather}
where the remainder satisfies
$R(\valpha,\vy)=O\!\left((\alphazero+1)^{-1}\right).$

\end{theorem}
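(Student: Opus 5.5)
We plan a second-order Taylor expansion of \(\ell(\cdot,\vy)\) around the Dirichlet mean \(\hat{\vp}=\Piop(\valpha)\), followed by taking expectations under \(\vpi\sim\Dir(\valpha)\). Since \(\ell(\cdot,\vy)\) is \(C^2\) on a neighborhood of the relevant region, for every \(\vpi\) in that region we use the integral form of the remainder:
\begin{gather}
\ell(\vpi,\vy)=\ell(\hat{\vp},\vy)+\nabla\ell(\hat{\vp},\vy)^\top(\vpi-\hat{\vp})+\int_0^1(1-t)\,(\vpi-\hat{\vp})^\top H\big(\hat{\vp}+t(\vpi-\hat{\vp})\big)(\vpi-\hat{\vp})\,dt ,
\end{gather}
where \(H\) denotes the Hessian of \(\ell(\cdot,\vy)\). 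Both \(\vpi\) and \(\hat{\vp}\) lie in the simplex, which is convex, so the segment between them stays in \(\Delta^{K-1}\) and the expansion is legitimate. Taking expectations and using \(\EE[\vpi]=\hat{\vp}\), the first-order term vanishes. Hence \(R(\valpha,\vy)=\EE[\text{quadratic remainder}]\).

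Next we bound this remainder using the boundedness of the Hessian, \(\sup\opnorm{H}\le M\). This gives
\begin{gather}
|R(\valpha,\vy)|\le \tfrac{M}{2}\,\EE\norm{\vpi-\hat{\vp}}_2^2=\tfrac{M}{2}\sum_{k=1}^K\mathrm{Var}(\pi_k).
\end{gather}
We then insert the standard Dirichlet variance \(\mathrm{Var}(\pi_k)=\hat p_k(1-\hat p_k)/(\alphazero+1)\). Summing over \(k\) yields \(\sum_k\hat p_k(1-\hat p_k)=1-\norm{\hat{\vp}}_2^2\le 1\), and therefore
\begin{gather}
|R|\le \frac{M}{2(\alphazero+1)}.
\end{gather}
This is exactly the claimed \(O((\alphazero+1)^{-1})\) rate. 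As a sanity check, for the MSE loss the Hessian is \(2I\) and the remainder equals precisely the variance term in Eq.~\ref{eq:MSE_EDL_loss}, so the bound is sharp in that case.

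The main obstacle is making the phrase ``relevant region of the simplex'' rigorous. For cross-entropy the Hessian \(\ve_y\ve_y^\top/\pi_y^2\) blows up at the boundary, and a Dirichlet sample puts mass near the boundary. A uniform bound \(M\) is then available only on a region \(\{\pi_y\ge\delta\}\). To handle this, we split the expectation into the event \(A=\{\norm{\vpi-\hat{\vp}}\le r\}\), on which the segment stays inside the region where the Hessian is bounded, and its complement. On \(A\) the argument above applies verbatim. On \(A^c\) we bound the contribution by Chebyshev: \(\PP(A^c)\le \EE\norm{\vpi-\hat{\vp}}^2/r^2=O((\alphazero+1)^{-1})\). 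This suffices provided \(\ell\) is bounded, or at least has a uniformly bounded second moment under \(\Dir(\valpha)\), via Cauchy--Schwarz. For cross-entropy that second moment is finite when \(\alpha_y\) is bounded below, since \(\EE[\log^2\pi_y]\) is given by polygamma functions. If a sharper rate is desired, the higher Dirichlet central moments are \(O(\alphazero^{-2})\), so the tail contribution is of lower order.

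The local Lipschitz property of the Hessian is then used only to refine the remainder to \(\tfrac{1}{2}\mathrm{tr}(H(\hat{\vp})\Sigma)+O(\EE\norm{\vpi-\hat{\vp}}^3)=\tfrac{1}{2}\mathrm{tr}(H(\hat{\vp})\Sigma)+O(\alphazero^{-3/2})\), where \(\Sigma\) is the Dirichlet covariance. This refinement makes the leading \((\alphazero+1)^{-1}\) term explicit, which is consistent with the name ``explicit first-order expansion''.
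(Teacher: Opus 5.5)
Your proof is correct, and it takes a somewhat different and leaner route than the paper's.

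\textbf{The paper's route.} With $h=\vpi-\hat{\vp}$, the paper expands one order further: $\ell(\vpi,\vy)=\ell(\hat{\vp},\vy)+\nabla\ell(\hat{\vp},\vy)^\top h+\tfrac12 h^\top H(\hat{\vp})h+r$. It then proceeds in four steps:
\begin{itemize}
\item It bounds $|r|\le \tfrac{G}{6}\norm{h}_2^3$ via the Lipschitz-Hessian Taylor bound (Theorem~\ref{thm:taylor_C22M}).
\item It evaluates the quadratic term exactly as $\frac{1}{\alphazero+1}L_1(\hat{\vp},\vy)$, with $L_1=\tfrac12\mathrm{tr}\big(H(\hat{\vp})(\mathrm{Diag}(\hat{\vp})-\hat{\vp}\hat{\vp}^\top)\big)$, using the Dirichlet covariance.
\item It bounds $|L_1|\le \tfrac12(1-1/K)M$.
\item It controls the cubic remainder crudely through $\norm{h}_2^3\le 2\norm{h}_2^2$, which gives $|R|\le \frac{G}{3(\alphazero+1)}$.
\end{itemize}

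\textbf{Your route.} You stop at the integral form of the second-order remainder and bound it directly by $\tfrac{M}{2}\EE\norm{h}_2^2=\tfrac{M}{2}\cdot\frac{1-\norm{\hat{\vp}}_2^2}{\alphazero+1}$.
\begin{itemize}
\item This uses only the Hessian bound, so it shows the Lipschitz-Hessian hypothesis is not needed for the stated $O((\alphazero+1)^{-1})$ claim.
\item It gives a smaller constant: $M/2$ instead of roughly $M/2+G/3$.
\item It is exact for MSE.
\end{itemize}

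\textbf{What each buys.} The paper's route identifies the explicit leading coefficient $L_1$. However, because of the step $\norm{h}_2^3\le 2\norm{h}_2^2$, its remainder is of the same order as that leading term, so the orders are not actually separated. Your last paragraph uses $\EE\norm{h}_2^3\le(\EE\norm{h}_2^4)^{3/4}=O(\alphazero^{-3/2})$. That is what makes $L_1/(\alphazero+1)$ a genuine leading term, and it is sharper than what the paper proves.

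\textbf{A caution on your localization paragraph.} This part goes beyond the paper. The paper assumes the Hessian bounds on all of $\Delta^{K-1}$ and handles cross-entropy separately through the exact identity $\EE[-\log\pi_y]=\psi(\alphazero)-\psi(\alpha_y)$. Two points need fixing:
\begin{itemize}
\item For unbounded $\ell$ with only a bounded second moment, Chebyshev's $\PP(A^c)=O(\alphazero^{-1})$ combined with Cauchy--Schwarz gives a tail term of order $\alphazero^{-1/2}$, not $\alphazero^{-1}$. To get the claimed rate you need $\PP(A^c)=O(\alphazero^{-2})$, for instance by applying Markov's inequality to $\norm{h}_2^4$. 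So the higher-moment step you call optional is actually required.
\item You also need $\hat{\vp}$ to lie at distance at least $r$ inside the region where the Hessian is bounded. This is where a $\delta$-dependent constant enters.
\end{itemize}
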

\begin{proof}[Proof Sketch]
We first apply the second-order Taylor bound and then apply Theorem \ref{thm:taylor_C22M} to bound the remainder term. After applying the definition of the covariance for Dirichlet random variables, one obtains that the remainder is of size $O\!\left((\alphazero+1)^{-1}\right)$.
Further, one can see that the first-order term vanishes in expectation and the second-order term is also of size $O\!\left((\alphazero+1)^{-1}\right)$ by restructuring it and using the definition of the covariance for Dirichlet random variables. Proof details
and explicit non-asymptotic bounds are provided in
Appendix~\ref{app:dirichlet_expansion}.
\end{proof}
The preceding results show that Dirichlet-expected evidential losses can
be replaced, up to concentration-controlled correction terms, by plug-in
losses evaluated at projected probabilities:
\begin{gather}
\ell_{\mathrm{EDL}}(\valpha,\vy)
\approx
\ell_{\mathrm{plug}}(\valpha,\vy).
\end{gather}
This yields simplified evidential objectives that depend only on
\(\Piop(\valpha)\), eliminating the need to explicitly evaluate
Dirichlet-expected losses while retaining their dominant first-order
behavior.

\begin{remark}
    For losses that are only Lipschitz continuous on the simplex, one can still
    approximate the Dirichlet-expected evidential loss by the plug-in loss, but
    with a weaker remainder \(O((\alpha_0+1)^{-1/2})\). See
    Appendix~\ref{app:proof_of_leme_lipschitz_plugin}.
\end{remark}

The Taylor-based approximation in Theorem~\ref{prop:explicit_plugin}
requires uniformly bounded curvature on the relevant region of the
simplex. This condition is natural for smooth losses such as the
mean-squared error loss, but it is not globally valid for cross-entropy:
for \(\ell(\vpi,y)=-\log \pi_y\), the derivatives become singular as
\(\pi_y\to 0\). Since cross-entropy is central in classification, we state
a separate result based on the Dirichlet logarithmic moment, under an
interiority condition on the projected class probability.

\begin{lemma}[Cross-entropy plug-in correction]
\label{lem:ce_plugin_correction-main}
Let
\(
\vpi\sim\Dir(\valpha)
\),
\(
\hat\vp=\Piop(\valpha)=\valpha/\alphazero
\), and
\(
\alphazero=\sum_{j=1}^K \alpha_j
\).
Let \(\vy\in\{0,1\}^K\) be a one-hot target vector, and let
\(y\in\{1,\dots,K\}\) denote its target class index, so that \(y_j=1\) if
and only if \(j=y\). Consider the cross-entropy loss
$
\ell(\vpi,\vy)
=
-\sum_{j=1}^K y_j \log \pi_j
=
-\log \pi_y .
$
Writing \(p_y=\alpha_y/\alphazero\), if \(p_y\ge\delta>0\), then
\begin{gather}
\EE_{\vpi\sim\Dir(\valpha)}
\big[\ell(\vpi,\vy)\big]
=
-\log p_y
+
O_{\delta}(\alphazero^{-1}).
\end{gather}
\end{lemma}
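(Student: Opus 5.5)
We will not use the Taylor expansion of Theorem~\ref{prop:explicit_plugin} here, since the Hessian of \(-\log\pi_y\) blows up near \(\pi_y=0\). Instead we use the closed form \eqref{eq:CE_EDL_loss}:
\begin{gather}
\EE_{\vpi\sim\Dir(\valpha)}[-\log\pi_y]=\dgam(\alphazero)-\dgam(\alpha_y).
\end{gather}
Equivalently, \(\pi_y\sim\mathrm{Beta}(\alpha_y,\alphazero-\alpha_y)\) and \(\EE[\log\pi_y]=\dgam(\alpha_y)-\dgam(\alphazero)\). The lemma then reduces to showing that
\begin{gather}
\dgam(\alphazero)-\dgam(\alpha_y)-\log\frac{\alphazero}{\alpha_y}=O_\delta(\alphazero^{-1}).
\end{gather}
This holds because \(-\log p_y=\log\alphazero-\log\alpha_y\).

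\textbf{Step 1: a uniform digamma bound.} Write \(\dgam(x)=\log x-r(x)\). We will use the bound \(\frac{1}{2x}\le r(x)\le \frac1x\) for all \(x>0\). It follows from the integral representation
\begin{gather}
r(x)=\int_0^\infty\Big(\frac{1}{1-e^{-t}}-\frac1t\Big)e^{-tx}\,dt,
\end{gather}
since the bracketed factor lies in \([1/2,1]\). It can also be obtained from the recurrence \(\dgam(x+1)=\dgam(x)+1/x\) together with the asymptotic expansion. The important feature is that the bound is valid for every \(x>0\), not only as \(x\to\infty\). This matters because \(\alpha_y\) may be smaller than one (for instance when \(c=0\)).

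\textbf{Step 2: combine.} Substituting the bound gives
\begin{gather}
\dgam(\alphazero)-\dgam(\alpha_y)=\log\frac{\alphazero}{\alpha_y}+r(\alpha_y)-r(\alphazero),
\end{gather}
so the remainder is \(R=r(\alpha_y)-r(\alphazero)\). By Step 1, \(|R|\le \max\{r(\alpha_y),r(\alphazero)\}\le 1/\alpha_y\). Here is where the interiority assumption enters: \(\alpha_y=p_y\alphazero\ge\delta\alphazero\). Hence
\begin{gather}
|R|\le\frac{1}{\delta\,\alphazero}.
\end{gather}
This is the claimed \(O_\delta(\alphazero^{-1})\) bound, with an explicit constant \(1/\delta\). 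Since \(r(\alpha_y)\ge r(\alphazero)\) by monotonicity of \(r\), we also get \(R\ge0\), so the Dirichlet-expected loss always dominates the plug-in loss. This is consistent with Jensen's inequality, because \(-\log\) is convex.

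\textbf{Main obstacle.} The only delicate point is Step 1: we need a bound on \(\log x-\dgam(x)\) that is non-asymptotic and uniform down to \(x\to0^+\). The first approach to try is the integral representation above. If that turns out to be awkward, a fallback is to handle \(x\ge1\) with the standard inequalities \(\log x-\frac1x<\dgam(x)<\log x-\frac1{2x}\), and then reach \(x<1\) via \(\dgam(x)=\dgam(x+1)-1/x\). Everything else is algebra, and the \(\delta\)-dependence enters only through the single inequality \(\alpha_y\ge\delta\alphazero\).
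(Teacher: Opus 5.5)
Your proposal is correct and follows essentially the same route as the paper. Both use the closed form $\psi(\alphazero)-\psi(\alpha_y)$, the non-asymptotic bound $0\le \log t-\psi(t)\le 1/t$ for all $t>0$, and the interiority bound $\alpha_y\ge\delta\alphazero$; the paper simply cites the digamma bound as standard, while you justify it via the integral representation.

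The one difference is that you use the nonnegativity and monotonicity of $r$ instead of the triangle inequality. This gives the slightly sharper constant $1/\delta$ in place of the paper's $1+\delta^{-1}$, and it also shows $R\ge 0$, consistent with Jensen.
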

\begin{proof}[Proof Sketch]
Dirichlet logarithmic moment gives \(
\EE[-\log \pi_y]
=
\psi(\alphazero)-\psi(\alpha_y),
\)
where \(\psi\) denotes the digamma function. Rewriting
\(\alpha_y=p_y\alphazero\) separates the plug-in term \(-\log p_y\) from
two digamma correction terms. The standard asymptotic bound
\(|\psi(t)-\log t|=O(t^{-1})\) gives an error of order
\(O(\alphazero^{-1})+O(\alpha_y^{-1})\). Since \(p_y\ge\delta\), we have
\(\alpha_y\ge\delta\alphazero\), and hence the total correction is
\(O_\delta(\alphazero^{-1})\).
\end{proof}

\begin{remark}
Our bounds should be read as high-concentration approximations, corresponding 
to the high-evidence regime that evidential training is intended to 
reach for well-supported predictions. In this regime, Dirichlet-expected losses and their plug-in
counterparts have the same leading-order behavior. Since plug-in losses depend
on \(\valpha\) only through \(\Piop(\valpha)\), they isolate the
projected-probability component of the objective from the concentration scale.
The concentration scale remains a separate modeling degree of freedom;
Section~\ref{sec:experiments} empirically shows, through matched no-KL and
KL-regularized variants, that explicit regularization can substantially
change concentration-based selective prediction.
\end{remark}

\subsection{Interpretation of the KL regularizer}
\label{app:kl_interpretation}

Classical EDL objectives often include a KL-divergence regularizer between a
Dirichlet distribution induced by the network output and a non-informative
Dirichlet prior. From the first-order ERM perspective adopted in this work,
this term is interpreted as a deterministic regularization function of the
evidential parameters \(\valpha\). The Dirichlet distribution provides an
analytic form for the penalty, but the resulting objective remains an ordinary
deterministic training objective optimized by standard gradient-based methods.

In the classical construction, the KL term is applied after replacing the
target-class component by its prior value. Operationally, this removes the
target-class evidence from the regularizer, so the penalty mainly discourages
evidence assigned to incorrect classes. The KL term is therefore not required
to define the simplified evidential classifier or the plug-in objectives, but
it can affect the learned evidence scale and the induced class probabilities.
Consequently, it may influence both concentration-based uncertainty scores,
such as vacuity, and probability-based scores, such as predictive entropy. This
motivates the KL and no-KL variants included in the experimental comparison.
\section{Experiments}
\label{sec:experiments}

We evaluate classical evidential classifiers
(Definition~\ref{def:classical-evidential-classifier}) and simplified
evidential classifiers
(Definition~\ref{def:simplified-evidential-classifier}) using both
Dirichlet-expected losses and their plug-in counterparts
(Definition~\ref{def:plugin-evidential-loss}). Our goal is to test whether
the simplified framework preserves predictive accuracy and the operational
usefulness of uncertainty for selective prediction. Motivated by
Theorem~\ref{thm:softmax_simplified}, we also include softmax-style
parameterizations and evaluate all variants under a common
selective-prediction protocol.

\subsection{Experimental setup}
\label{sec:experimental_setup}

We evaluate all models on the Google Speech Commands v1 dataset
(\cite{warden2018speechcommands}) using the full \(30\)-class classification
task and the official train, validation, and test splits. All experiments use
the same MatchboxNet backbone (\cite{majumdar2020matchboxnet}) and follow the
NVIDIA NeMo MatchboxNet preprocessing and augmentation pipeline
(\cite{kuchaiev2019nemo}). They also use the same base optimizer,
learning-rate schedule, and number of training epochs, except for the
objective-specific regularization choices summarized in
Table~\ref{tab:model_variants}. Thus, the comparisons isolate the effect of
the evidential parameterization and training objective rather than differences
in architecture, data processing, or optimization setup. Full configuration
details are provided in Appendix~\ref{app:experimental_details}.

\begin{table}[!htbp]
\centering
\caption{Model variants used in the experiments. Here
\(e_i=\tau(z_i)\), \(\valpha=\phi(\ve)\), and
\(\hat{\vp}=\Piop(\valpha)\). For KL-regularized models,
\(\lambda_t=\min(1,t/T)\), where \(t\) denotes the epoch, and the KL term is
computed using shifted Dirichlet parameters
\(\alpha_i^{\mathrm{KL}}=e_i+1\).}
\label{tab:model_variants}
{\scriptsize
\setlength{\tabcolsep}{2.0pt}
\renewcommand{\arraystretch}{1.08}
\begin{tabular}{@{}lccccccccc@{}}
\toprule
&
EDL-CE
& EDL-CE no KL
& EDL-MSE
& Plug-in CE
& Plug-in MSE
& Softmax
& Softplus
& Softmax+KL
& Softmax+EDL-CE
\\
\midrule
\(\tau(z_i)\)
& softplus
& softplus
& softplus
& softplus
& softplus
& \(\exp\)
& softplus
& \(\exp\)
& \(\exp\)
\\
\(\phi(e_i)\)
& \(e_i+1\)
& \(e_i+1\)
& \(e_i+1\)
& \(e_i+1\)
& \(e_i+1\)
& \(e_i\)
& \(e_i\)
& \(e_i\)
& \(e_i\)
\\
Loss
& Dir. CE+KL
& Dir. CE
& Dir. MSE+KL
& CE\((\hat{\vp})\)
& MSE\((\hat{\vp})\)
& CE\((\hat{\vp})\)
& CE\((\hat{\vp})\)
& CE\((\hat{\vp})\) + KL
& Dir. CE
\\
\(T\)
& 400
& --
& 600
& --
& --
& --
& --
& 400
& --
\\
\bottomrule
\end{tabular}
}
\end{table}
The Softmax model recovers the standard softmax classifier by
Theorem~\ref{thm:softmax_simplified}. The Softplus model is the analogous \(c=0\) projected classifier using a softplus evidence map instead of an exponential.

\subsection{Selective-prediction protocol}
\label{sec:selective_prediction_protocol}

We evaluate selective prediction using two uncertainty scores:
\begin{gather}
u_{\mathrm{vacuity}}=\frac{K}{\alpha_0},
\qquad
u_{\mathrm{entropy}}
=
\frac{-\sum_{j=1}^K \hat p_j \log \hat p_j}{\log K}
\end{gather}
where \(\alpha_0=\sum_{j=1}^K \alpha_j\). All scores are computed for
every model under the common prediction parametrization introduced above and
evaluated using the same thresholding protocol. For models with \(c=0\), we
compute vacuity using the shifted concentration
\(\alpha_i^s=e_i+1\), so that \(\alpha_0^s=\sum_{j=1}^K e_j+K\); this preserves the uncertainty
ordering.

Given an uncertainty score and threshold \(t\), predictions with uncertainty
above \(t\) are withheld. With \(n_c\), \(n_f\), and \(n_w\) denoting
correct, incorrect, and withheld samples, we report
\(\mathrm{Acc}_{\mathrm{th}}=\frac{n_c}{n_c+n_f}\),
\(\mathrm{Acc}_{\mathrm{total}}=\frac{n_c}{n_c+n_f+n_w}\), and
\(\mathrm{Coverage}=\frac{n_c+n_f}{n_c+n_f+n_w}\). The main tables report \(\mathrm{Acc}_{\mathrm{total}}\) at operating points selected by target \(\mathrm{Acc}_{\mathrm{th}}\) values of \(99.0\%\),
\(99.5\%\), and \(99.9\%\). Details of the operating-point selection rule are
given in Appendix~\ref{app:selective_prediction_details}.

\subsection{Main results}
\label{sec:main_results}

Table~\ref{tab:entropy_vacuity_total_accuracy_thresholds}
reports total accuracy at selective-prediction operating points selected to
match target thresholded accuracies, using entropy and vacuity as uncertainty
scores. Under entropy-based selection, the \(c=0\) simplified evidential
classifiers trained with CE\((\hat{\vp})\), namely Softmax and Softplus
models, achieve the best or near-best total accuracy across most operating
points. For example, at the \(99.9\%\) target, Softmax and Softplus models
achieve total accuracies of \(88.41\%\) and \(87.64\%\), respectively. This
shows that entropy remains a strong uncertainty score for selective
prediction on GSC V1, and that the simplified evidential classifiers trained with plug-in losses provide a strong baseline for both prediction and selective prediction.

The plug-in variants closely track their classical EDL counterparts across
both CE and MSE losses. Plug-in EDL-CE achieves nearly the same base accuracy
and total accuracy at the selective operating points as EDL-CE
under both entropy- and vacuity-based thresholding. The same pattern is
observed for EDL-MSE and Plug-in EDL-MSE. These results are consistent with
the approximation results in Section~\ref{sec:simplified_edl}: replacing the
Dirichlet-expected loss by the corresponding loss evaluated at the projected
Dirichlet mean preserves the main predictive and operational uncertainty
behavior on this benchmark.

The vacuity-based results highlight that Dirichlet-expected losses alone
do not make vacuity competitive with entropy. Without KL regularization, 
vacuity still contains some reliability information, since thresholding by vacuity improves thresholded accuracy. However, the no-KL variants degrade strongly at stricter
operating points. In particular, Softmax outperforms both EDL-CE no KL and
Softmax+EDL-CE under vacuity-based selection: at the \(99.9\%\) target,
Softmax obtains \(62.00\%\) total accuracy, compared with \(47.14\%\) for
EDL-CE no KL and \(50.81\%\) for Softmax+EDL-CE. Thus, replacing the standard
CE\((\hat{\vp})\) objective by the Dirichlet-expected CE loss does not by
itself make vacuity more effective. In contrast, adding KL regularization to
the Softmax model raises the corresponding vacuity-based total accuracy to
\(80.36\%\), while entropy-based selection remains broadly similar despite visible changes in the KDE plots in Figure~\ref{figure:kde_kl}. This
suggests that the KL regularizer substantially improves the reliability
ranking induced by vacuity.

Overall, the experiments indicate that the proposed simplified EDL framework,
based on plug-in losses and simplified evidential classifiers, captures the
main predictive and selective-prediction behavior of the classical EDL
framework on this benchmark. The results also show that KL regularization
mainly affects the usefulness of vacuity, rather than entropy, for selective
prediction.

It should be noted that in some cases the two uncertainty metrics lead to
almost exactly the same coverage because the accepted sets selected by entropy
and vacuity largely overlap. In other cases, the rankings induced by the two
metrics differ substantially, leading to different selective-prediction
behavior.

\begin{table}[!htbp]
\centering
\caption{
Total accuracy under entropy- and vacuity-based selective prediction on the
GSC V1 test split. Values are means over 5 runs with
{\scriptsize$\pm 2\sigma$}. Columns report \(\mathrm{Acc}_{\mathrm{total}}\)
at the largest-coverage operating point satisfying the indicated
\(\mathrm{Acc}_{\mathrm{th}}\) target up to numerical tolerance.
Corresponding entropy-based curves are shown in
Figure~\ref{fig:appendix_entropy_threshold_curves}.
}
\label{tab:entropy_vacuity_total_accuracy_thresholds}
{\small
\setlength{\tabcolsep}{2.4pt}
\begin{tabular}{@{}lcccc@{\hspace{1.0em}}ccc@{}}
\toprule
&
&
\multicolumn{3}{c}{Entropy}
&
\multicolumn{3}{c}{Vacuity}
\\
\cmidrule(lr){3-5}
\cmidrule(lr){6-8}
Model
& Base Acc.
& 99.0\% & 99.5\% & 99.9\%
& 99.0\% & 99.5\% & 99.9\%
\\
\midrule
Softmax
& \textbf{97.21}{\tiny$\pm$0.18}
& \textbf{96.47}{\tiny$\pm$0.59}
& 94.50{\tiny$\pm$1.23}
& \textbf{88.41}{\tiny$\pm$2.39}
& 95.54{\tiny$\pm$1.38}
& 86.57{\tiny$\pm$2.93}
& 62.00{\tiny$\pm$17.09}
\\
Softplus
& 97.07{\tiny$\pm$0.16}
& 96.14{\tiny$\pm$0.46}
& \textbf{94.81}{\tiny$\pm$0.63}
& 87.64{\tiny$\pm$6.41}
& 63.93{\tiny$\pm$6.19}
& 58.75{\tiny$\pm$4.13}
& 52.87{\tiny$\pm$6.31}
\\
Softmax + KL
& 96.84{\tiny$\pm$0.18}
& 95.79{\tiny$\pm$0.35}
& 94.19{\tiny$\pm$0.80}
& 84.53{\tiny$\pm$3.12}
& 95.71{\tiny$\pm$0.35}
& \textbf{94.13}{\tiny$\pm$0.75}
& 80.36{\tiny$\pm$7.60}
\\
Softmax + EDL-CE
& 97.03{\tiny$\pm$0.32}
& 96.12{\tiny$\pm$0.71}
& 94.79{\tiny$\pm$0.70}
& 87.76{\tiny$\pm$6.57}
& 94.45{\tiny$\pm$2.24}
& 84.79{\tiny$\pm$3.67}
& 50.81{\tiny$\pm$24.50}
\\
EDL-CE
& 96.88{\tiny$\pm$0.37}
& 95.76{\tiny$\pm$0.75}
& 93.61{\tiny$\pm$1.26}
& 81.61{\tiny$\pm$5.80}
& \textbf{95.73}{\tiny$\pm$0.78}
& 93.55{\tiny$\pm$1.23}
& 81.62{\tiny$\pm$5.81}
\\
EDL-CE no KL
& 96.68{\tiny$\pm$0.26}
& 94.81{\tiny$\pm$1.17}
& 92.40{\tiny$\pm$1.26}
& 75.58{\tiny$\pm$14.11}
& 89.28{\tiny$\pm$3.83}
& 81.09{\tiny$\pm$6.57}
& 47.14{\tiny$\pm$15.60}
\\
Plug-in EDL-CE
& 96.84{\tiny$\pm$0.38}
& 95.68{\tiny$\pm$0.67}
& 93.39{\tiny$\pm$1.57}
& 83.55{\tiny$\pm$6.58}
& 95.68{\tiny$\pm$0.68}
& 93.40{\tiny$\pm$1.56}
& \textbf{83.55}{\tiny$\pm$6.58}
\\
\addlinespace[0.15em]
EDL-MSE
& 96.55{\tiny$\pm$0.19}
& 94.91{\tiny$\pm$0.37}
& 92.87{\tiny$\pm$0.95}
& 80.93{\tiny$\pm$5.09}
& 94.89{\tiny$\pm$0.39}
& 92.87{\tiny$\pm$0.95}
& 80.93{\tiny$\pm$5.09}
\\
Plug-in EDL-MSE
& 96.55{\tiny$\pm$0.10}
& 94.88{\tiny$\pm$0.31}
& 92.46{\tiny$\pm$0.62}
& 82.53{\tiny$\pm$6.08}
& 94.86{\tiny$\pm$0.26}
& 92.46{\tiny$\pm$0.62}
& 82.53{\tiny$\pm$6.08}
\\
\bottomrule
\end{tabular}
}
\end{table}

\begin{figure*}[!htbp]
\centering
\begin{minipage}{0.25\linewidth}
    \centering
    \includegraphics[width=\linewidth]{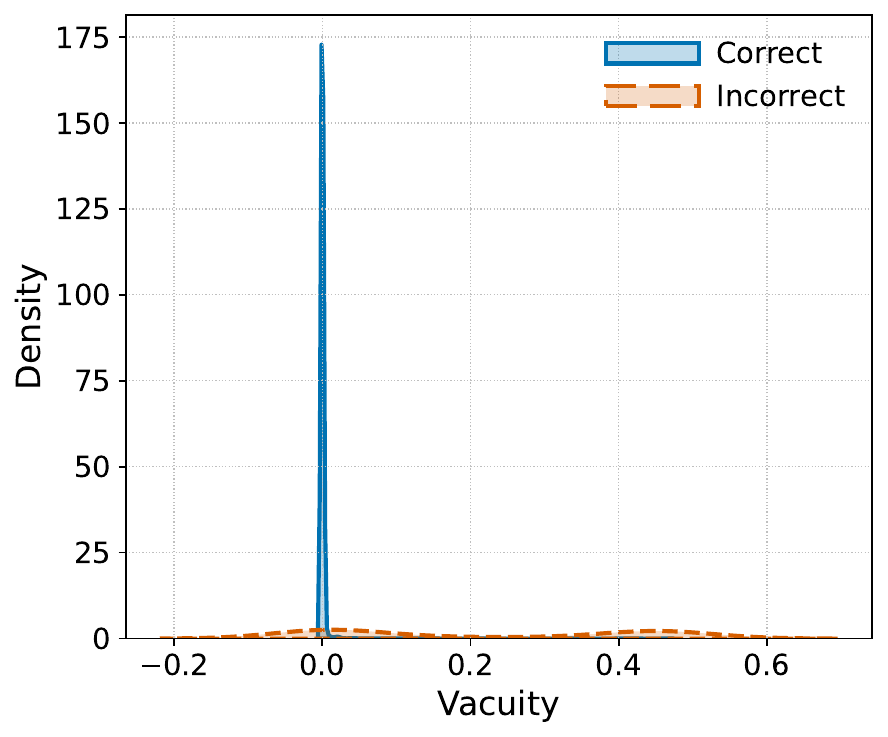}\\[-0.3em]
    {\scriptsize Softmax}
\end{minipage}\hfill
\begin{minipage}{0.25\linewidth}
    \centering
    \includegraphics[width=\linewidth]{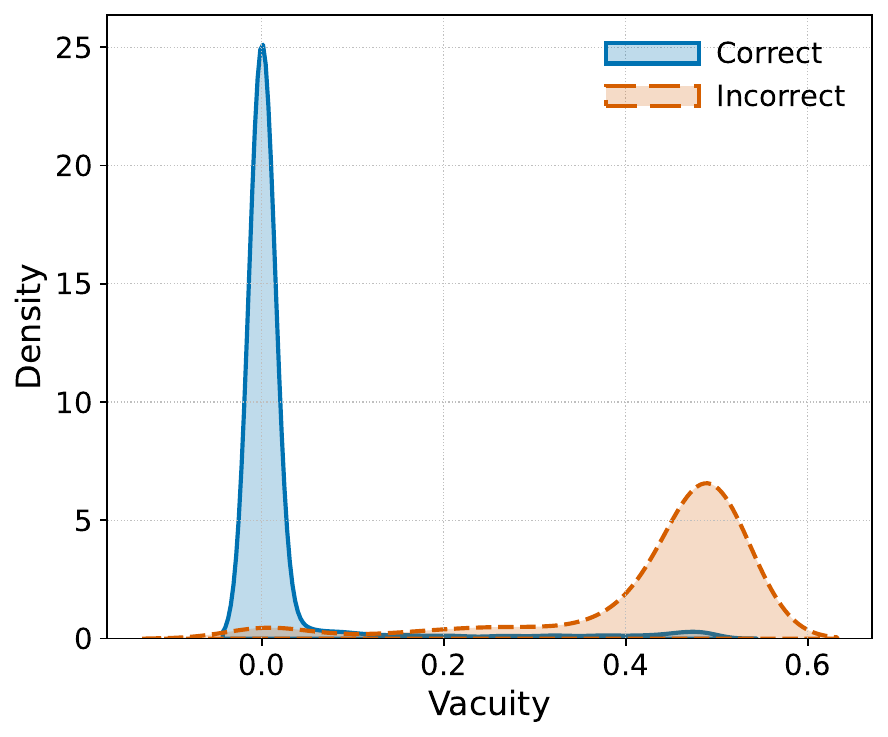}\\[-0.3em]
    {\scriptsize Softmax + KL}
\end{minipage}\hfill
\begin{minipage}{0.25\linewidth}
    \centering
    \includegraphics[width=\linewidth]{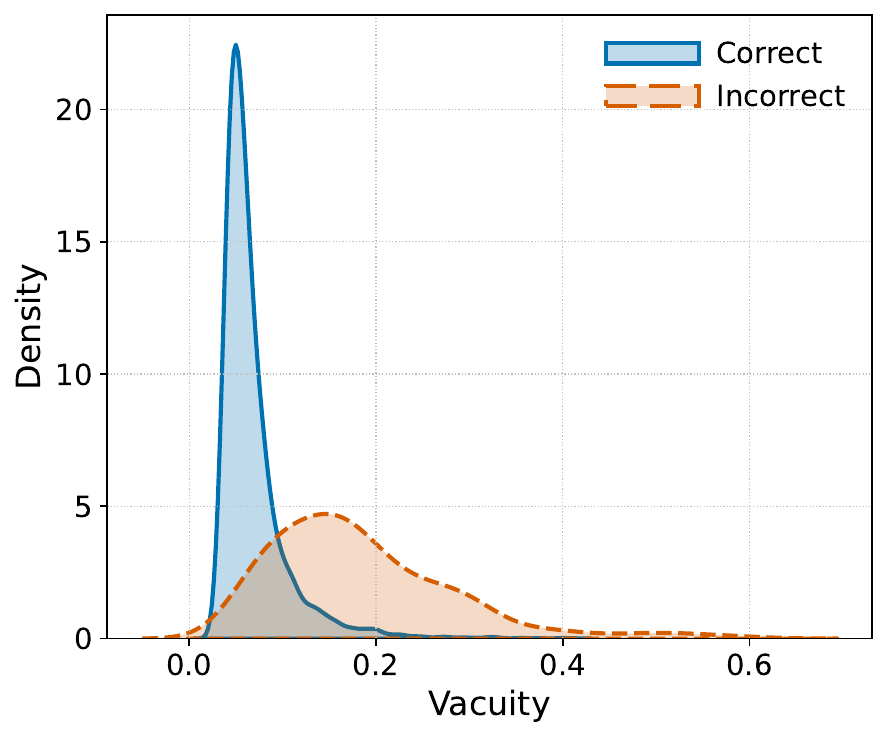}\\[-0.3em]
    {\scriptsize EDL-CE no KL}
\end{minipage}\hfill
\begin{minipage}{0.25\linewidth}
    \centering
    \includegraphics[width=\linewidth]{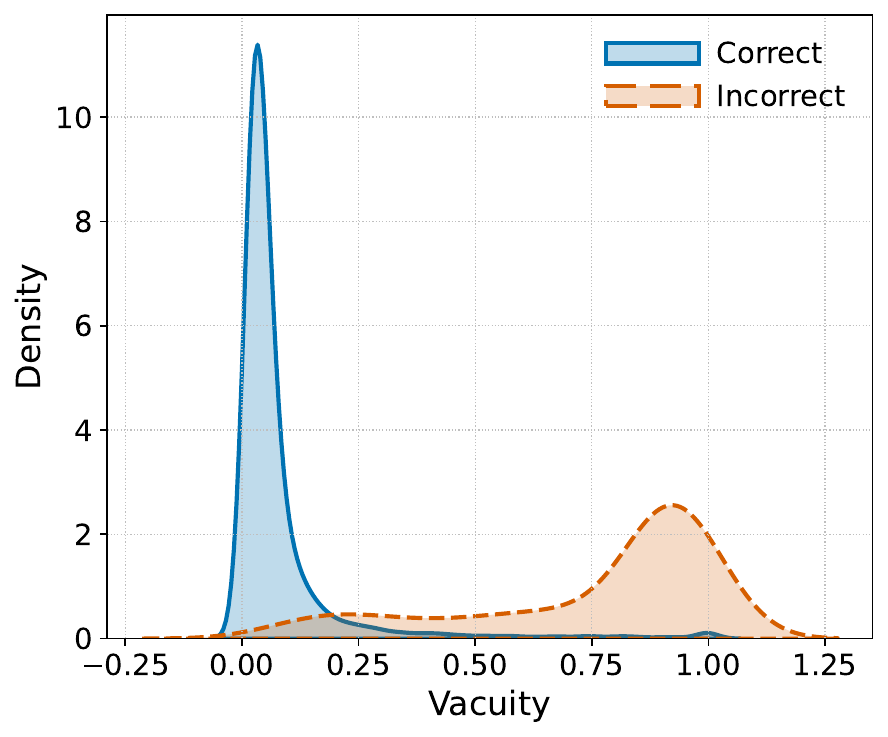}\\[-0.3em]
    {\scriptsize EDL-CE}
\end{minipage}
\caption{
Vacuity KDEs for correctly and incorrectly classified test samples on GSC V1.
The selected pairs compare KL-regularized models with their no-KL counterparts;
additional KDE plots are provided in Appendix~\ref{app:additional_kde}.
}
\label{figure:kde_kl}
\end{figure*}

\section{Conclusion and future directions}
\label{sec:conclusion}
In this work, we showed that classical EDL objectives can be approximated by a class of simplified plug-in objectives. Empirically we demonstrated that these approximations perform as well as their classical counterparts for predictive and selective prediction on the Google Speech Command v1 dataset. These results suggest that there is a strong practical value of evidential methods when output parametrization, plug-in losses, and potentially regularization are combined.

These findings motivate future work on the simplified EDL framework, consisting 
of plug-in losses (Definition~\ref{def:plugin-evidential-loss}) and simplified 
evidential classifiers (Definition~\ref{def:simplified-evidential-classifier}), 
together with explicit uncertainty-aware regularization. In particular, 
softmax-style models, viewed as simplified evidential classifiers by
Theorem~\ref{thm:softmax_simplified}, provide a natural starting point for
studying decision-relevant uncertainty while retaining simple single-pass
inference.

A limitation of the present study is that the empirical evaluation is restricted
to a single keyword-spotting benchmark and in-distribution selective prediction.
Future work should therefore evaluate the same framework on a wider range of
datasets, architectures, and distribution-shift settings. This is especially
important because selective prediction can improve deployment reliability by
identifying uncertain situations in which additional safety measures or human
review may be needed.

A natural follow-up question would be to understand when the uncertainty ordering induced by vacuity and entropy does or does not agree locally or globally, and how to best combine these two uncertainty metrics through a two-threshold selective-prediction rule.

\begin{ack}
FK gratefully acknowledges the support of the German Science Foundation (DFG) in the context of the priority program Theoretical Foundations of Deep Learning (project KR 4512/6-1). HL and FK gratefully acknowledge the support of the Munich Center for Machine Learning (MCML).

\end{ack}

\bibliographystyle{plainnat}
\bibliography{refs}

\appendix

\section{Explicit expansion of Dirichlet-expected losses}
\label{app:dirichlet_expansion}
This appendix provides an explicit asymptotic expansion of Dirichlet-expected
losses around the Dirichlet mean. The result characterizes how Dirichlet-based
objectives differ from plug-in losses evaluated at the projected probabilities
and shows that the additional contribution enters as a variance-induced
correction term arising from the Dirichlet covariance. This expansion clarifies the structural
role of projected probabilities in evidential classifiers and supports the
classifier definitions adopted in the main text.

\begin{theorem}[Second-order Taylor bound for twice
continuously differentiable Lipschitz functions {\cite[Lemma~1.2.4]{nesterov2013introductory}}] 
\label{thm:taylor_C22M}
Let $U\subset\mathbb{R}^K$ be open and let $f:U\to\mathbb{R}$ be twice
continuously differentiable. Assume that $\nabla^2 f$ is Lipschitz on $U$ with
constant $M\ge 0$ in operator norm, i.e.
\[
\|\nabla^2 f(u)-\nabla^2 f(v)\|_{\mathrm{op}}
\le
M\|u-v\|_2,
\qquad \forall\,u,v\in U.
\]
Then for all $x,y\in U$ such that the line segment $[x,y]\subset U$, we have
\begin{gather}
\label{eq:taylor_grad_remainder_bound}
\big\|\nabla f(y)-\nabla f(x)-\nabla^2 f(x)(y-x)\big\|_2
\le
\frac{M}{2}\,\|y-x\|_2^2,
\\[2mm]
\label{eq:taylor_func_remainder_bound}
\Big|
f(y)-f(x)-\nabla f(x)^\top (y-x)
-\frac12 (y-x)^\top \nabla^2 f(x)(y-x)
\Big|
\le
\frac{M}{6}\,\|y-x\|_2^3.
\end{gather}
\end{theorem}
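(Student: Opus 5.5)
We prove both inequalities of Theorem~\ref{thm:taylor_C22M} with the fundamental theorem of calculus applied along the segment $[x,y]$. The Lipschitz property of the Hessian is then used on the difference $\nabla^2 f(x+t(y-x))-\nabla^2 f(x)$. Write $h=y-x$ and $g(t)=x+th$ for $t\in[0,1]$. Since $[x,y]\subset U$ and $f\in C^2(U)$, the composed maps $t\mapsto \nabla f(g(t))$ and $t\mapsto f(g(t))$ are $C^1$ and $C^2$ on a neighbourhood of $[0,1]$, so all integral representations below are valid.

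\emph{Gradient bound \eqref{eq:taylor_grad_remainder_bound}.} By the chain rule, $\frac{d}{dt}\nabla f(g(t))=\nabla^2 f(g(t))h$. Hence
\[
\nabla f(y)-\nabla f(x)-\nabla^2 f(x)h=\int_0^1\big(\nabla^2 f(x+th)-\nabla^2 f(x)\big)h\,dt .
\]
We take Euclidean norms and move the norm inside the (vector-valued) integral. Then we apply $\|Ah\|_2\le\opnorm{A}\|h\|_2$ and the Lipschitz assumption $\opnorm{\nabla^2 f(x+th)-\nabla^2 f(x)}\le Mt\|h\|_2$. This bounds the left-hand side by $M\|h\|_2^2\int_0^1 t\,dt=\frac M2\|h\|_2^2$.

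\emph{Function bound \eqref{eq:taylor_func_remainder_bound}.} Set $\varphi(t)=f(g(t))$. Then $\varphi'(t)=\nabla f(x+th)^\top h$, and by the fundamental theorem of calculus
\[
f(y)-f(x)-\nabla f(x)^\top h=\int_0^1\big(\nabla f(x+th)-\nabla f(x)\big)^\top h\,dt .
\]
We subtract $\frac12 h^\top\nabla^2 f(x)h=\int_0^1 t\,h^\top\nabla^2 f(x)h\,dt$ inside this integral. The integrand becomes $\big(\nabla f(x+th)-\nabla f(x)-\nabla^2 f(x)(th)\big)^\top h$. The gradient bound just proved holds for every point of the segment, since $[x,x+th]\subset[x,y]\subset U$. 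Applied to the pair $x$, $x+th$, together with Cauchy--Schwarz, it bounds the integrand in absolute value by $\frac M2 t^2\|h\|_2^3$. Integrating gives $\frac M2\cdot\frac13\|h\|_2^3=\frac M6\|h\|_2^3$.

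There is no real obstacle. The only points to watch are the norm estimate for the vector-valued integral, $\|\int v\|\le\int\|v\|$, and the observation that the sub-segment $[x,x+th]$ lies in $U$, which is what allows the first inequality to be reused in the second. Alternatively, the result can simply be cited from \cite[Lemma~1.2.4]{nesterov2013introductory}.
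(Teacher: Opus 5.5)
Your proof is correct and is the standard argument behind the cited Nesterov Lemma~1.2.4: write the remainder as an integral along the segment, bound it with the Hessian Lipschitz estimate to get the factor $\int_0^1 t\,dt$, then reuse the gradient bound on the sub-segment $[x,x+th]$ to get $\int_0^1 t^2\,dt$. The paper gives no proof of its own and simply cites that lemma, so your argument is essentially the paper's route.
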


Given the above we can restate our main result Theorem \ref{prop:explicit_plugin}.
\begin{theorem}[Explicit expansion of Dirichlet-expected losses]
\label{lem:dirichlet_erm_loss}
Let
\begin{gather}
\vpi \sim \Dir(\valpha),
\qquad
\valpha\in\R_{+}^{K},
\qquad
\alphazero := \sum_{i=1}^{K}\alpha_i,
\qquad
\hat \vp := \Piop(\valpha)
=
\frac{\valpha}{\alphazero}
\in\Delta^{K-1}.
\end{gather}
Fix a label \(\vy\). Assume that \(\ell(\cdot,\vy)\) is twice continuously
differentiable on \(\Delta^{K-1}\), with Hessian
\begin{gather}
H_{\ell}(q,y)
:=
\nabla_q^2\ell(q,y).
\end{gather}
Assume further that there exist constants \(M\ge0\) and \(G\ge0\) such that,
for all \(q,u,v\in\Delta^{K-1}\),
\begin{align}
\label{eq:hessian_op_bound}
\opnorm{H_{\ell}(q,y)}
&\le M,
\\
\label{eq:hessian_lipschitz}
\opnorm{H_{\ell}(u,y)-H_{\ell}(v,y)}
&\le
G\norm{u-v}_2 .
\end{align}
Then
\begin{align}
\ell_{\mathrm{EDL}}(\valpha,\vy) = \EE_{\vpi\sim\Dir(\valpha)}[\ell(\vpi,\vy)]
&=
\ell(\hat \vp,\vy)
+
\frac{1}{\alphazero+1}L_1(\hat \vp,\vy)
+
R(\valpha,\vy) \\
&=  \ell_{\mathrm{plug}}(\valpha,\vy) + O\!\left((\alphazero+1)^{-1}\right)
\end{align}
where
\begin{gather}
\label{eq:l1def}
L_1(\hat \vp,\vy)
:=
\frac12
\mathrm{tr}\!\Big(
H_{\ell}(\hat \vp,\vy)
\big(\mathrm{Diag}(\hat \vp)-\hat \vp\hat \vp^\top\big)
\Big).
\end{gather}

\end{theorem}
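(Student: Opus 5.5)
We apply Theorem~\ref{thm:taylor_C22M} to \(f=\ell(\cdot,\vy)\), expanding around the Dirichlet mean: take \(x=\hat\vp\) and \(y=\vpi\). Both points lie in the simplex, and the simplex is convex, so the segment \([\hat\vp,\vpi]\) lies in \(\Delta^{K-1}\), where the Hessian bounds \eqref{eq:hessian_op_bound}--\eqref{eq:hessian_lipschitz} hold. Formally, Theorem~\ref{thm:taylor_C22M} requires an open set. I will either use the neighborhood assumption of Theorem~\ref{prop:explicit_plugin}, or note that the integral-remainder proof of \eqref{eq:taylor_func_remainder_bound} only uses the segment. Writing \(\Delta:=\vpi-\hat\vp\), this gives pointwise
\begin{gather*}
\ell(\vpi,\vy)=\ell(\hat\vp,\vy)+\nabla\ell(\hat\vp,\vy)^\top\Delta+\tfrac12\Delta^\top H_\ell(\hat\vp,\vy)\Delta+r(\vpi),\qquad |r(\vpi)|\le \tfrac{G}{6}\norm{\Delta}_2^3 .
\end{gather*}

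Next we take expectations over \(\vpi\sim\Dir(\valpha)\). The linear term vanishes because \(\EE[\vpi]=\hat\vp\). For the quadratic term we use \(\EE[\Delta^\top H\Delta]=\mathrm{tr}(H\,\mathrm{Cov}(\vpi))\) together with the Dirichlet covariance
\[
\mathrm{Cov}(\vpi)=\frac{\mathrm{Diag}(\hat\vp)-\hat\vp\hat\vp^\top}{\alphazero+1}.
\]
This yields exactly \(\frac{1}{\alphazero+1}L_1(\hat\vp,\vy)\), and we define \(R(\valpha,\vy):=\EE[r(\vpi)]\). Since \(\opnorm{H_\ell}\le M\) and \(\mathrm{tr}(\mathrm{Diag}(\hat\vp)-\hat\vp\hat\vp^\top)=1-\norm{\hat\vp}_2^2\le1\), and the matrix is PSD, we get \(|L_1|\le \tfrac M2\). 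The second-order term is therefore \(O((\alphazero+1)^{-1})\) with constant \(M/2\).

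The main step is bounding \(\EE\norm{\Delta}_2^3\). We use Cauchy--Schwarz in the form \(\EE\norm{\Delta}^3\le(\EE\norm{\Delta}^2)^{1/2}(\EE\norm{\Delta}^4)^{1/2}\). The second moment is \(\EE\norm{\Delta}^2=\mathrm{tr}\,\mathrm{Cov}(\vpi)=(1-\norm{\hat\vp}^2)/(\alphazero+1)\le (\alphazero+1)^{-1}\). For the fourth moment there are two options. The first is to compute \(\EE(\pi_i-\hat p_i)^4\) from Beta central moments (each \(\pi_i\sim\mathrm{Beta}(\alpha_i,\alphazero-\alpha_i)\)), which are \(O((\alphazero+1)^{-2})\) uniformly in \(\hat p_i\in[0,1]\). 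This gives \(\EE\norm{\Delta}^4\le K\sum_i\EE\Delta_i^4=O((\alphazero+1)^{-2})\) and hence \(\EE\norm{\Delta}^3=O((\alphazero+1)^{-3/2})\). The second, cruder option avoids the fourth moment entirely: since \(\norm{\Delta}_2\le\sqrt2\) on the simplex, \(\EE\norm{\Delta}^3\le\sqrt2\,\EE\norm{\Delta}^2\le\sqrt2(\alphazero+1)^{-1}\). The crude bound already suffices for the stated \(O((\alphazero+1)^{-1})\) claim, so I will present it as the primary argument and mention the sharper \((\alphazero+1)^{-3/2}\) rate from Beta moments as a refinement. The refinement shows that \(L_1/(\alphazero+1)\) is genuinely the leading correction.

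Combining the pieces gives the explicit non-asymptotic bound
\[
|\ell_{\mathrm{EDL}}-\ell_{\mathrm{plug}}|\le \Big(\tfrac M2+\tfrac{\sqrt2 G}{6}\Big)(\alphazero+1)^{-1},
\]
which proves both displayed equalities of the theorem.
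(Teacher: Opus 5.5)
Your proof is correct and follows the paper's argument essentially step for step. You expand around $\hat{\vp}$ with the $\frac{G}{6}\|\vpi-\hat{\vp}\|_2^3$ remainder, kill the linear term, use the Dirichlet covariance to turn the quadratic term into $L_1/(\alphazero+1)$ with $|L_1|\le M/2$, and reduce the cubic moment to $\mathrm{tr}\,\mathrm{Cov}(\vpi)$ via a simplex-diameter bound. The differences are only refinements: you use the sharp diameter $\sqrt{2}$ where the paper uses $2$, giving $\sqrt{2}G/6$ instead of $G/3$; you flag the open-set technicality in invoking the Taylor lemma; and your optional Beta fourth-moment argument correctly gives the remainder as $O((\alphazero+1)^{-3/2})$, which the paper does not establish.
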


\begin{proof}
Fix \(\vy\) and set
\begin{gather}
h:= \vpi-\hat \vp.
\end{gather}
Since \(\vpi,\hat \vp\in\Delta^{K-1}\) and the simplex is convex, the line segment
between \(\hat \vp\) and \(\vpi\) is contained in \(\Delta^{K-1}\). Applying Taylor's
theorem with Lipschitz Hessian to \(q\mapsto\ell(q,\vy)\) along this segment gives
\begin{gather}
\ell(\vpi,\vy)
=
\ell(\hat \vp,\vy)
+
\nabla\ell(\hat \vp,\vy)^\top h
+
\frac12 h^\top H_{\ell}(\hat \vp,\vy)h
+
r(\vpi,\vy),
\end{gather}
Given Theorem \ref{thm:taylor_C22M} we obtain the following bound of the remainder term
\begin{gather}
|r(\vpi,\vy)|
\le
\frac{G}{6}\norm{h}_2^3
=
\frac{G}{6}\norm{\vpi-\hat \vp}_2^3 .
\end{gather}
Taking expectations and using \(\EE[\vpi-\hat \vp]=0\), we obtain
\begin{gather}
\EE[\ell(\vpi,\vy)]
=
\ell(\hat \vp,\vy)
+
\frac12
\EE\!\left[
h^\top H_{\ell}(\hat \vp,\vy)h
\right]
+
\EE[r(\vpi,\vy)].
\end{gather}
The quadratic term satisfies
\begin{gather}
\EE\!\left[
h^\top H_{\ell}(\hat \vp,\vy)h
\right]
=
\mathrm{tr}\!\left(
H_{\ell}(\hat \vp,\vy)\,\mathrm{Cov}(\vpi)
\right).
\end{gather}
For \(\vpi\sim\Dir(\valpha)\),
\begin{gather}
\mathrm{Cov}(\vpi)
=
\frac{1}{\alphazero+1}
\left(
\mathrm{Diag}(\hat \vp)-\hat \vp\hat \vp^\top
\right).
\end{gather}
Therefore,
\begin{gather}
\EE[\ell(\vpi,\vy)]
=
\ell(\hat \vp,\vy)
+
\frac{1}{\alphazero+1}L_1(\hat \vp,\vy)
+
R(\valpha,\vy),
\end{gather}
here \(R(\valpha,\vy)
:=
\EE[r(\vpi,\vy)] \) and \(L_1\) as defined in Eq. \ref{eq:l1def}.

To bound \(L_1(\hat \vp,\vy)\), note that
\(\mathrm{Diag}(\hat \vp)-\hat \vp\hat \vp^\top\) is symmetric positive semidefinite. By
the trace bound,
\begin{align}
|L_1(\hat \vp,\vy)|
& \le
\frac12
\opnorm{H_{\ell}(\hat \vp,\vy)}
\,
\mathrm{tr}\!\left(
\mathrm{Diag}(\hat \vp)-\hat \vp\hat \vp^\top
\right)
\\
& =
\frac12
\opnorm{H_{\ell}(\hat \vp,\vy)}
(1-\norm{\hat \vp}_2^2)
\\
& \le
\frac12 M(1-\norm{\hat \vp}_2^2).
\end{align}
Since \(\norm{\hat \vp}_2^2\ge 1/K\) for every \(\hat \vp\in\Delta^{K-1}\), it follows that
\begin{gather}
|L_1(\hat \vp,\vy)|
\le
\frac12\left(1-\frac1K\right)M.
\end{gather}

We still need to bound the remainder by the Taylor remainder bound,
\begin{gather}
|R(\valpha,y)|
=
|\EE[r(\vpi,\vy)]|
\le
\EE|r(\vpi,\vy)|
\le
\frac{G}{6}\EE\norm{\vpi-\hat \vp}_2^3.
\end{gather}
Since \(\vpi,\hat \vp\in\Delta^{K-1}\), we have
\begin{gather}
\norm{\vpi-\hat \vp}_2\le 2,
\qquad
\norm{\vpi-\hat \vp}_2^3
\le
2\norm{\vpi-\hat \vp}_2^2.
\end{gather}
Consequently,
\begin{gather}
\EE\norm{\vpi-\hat \vp}_2^3
\le
2\EE\norm{\vpi-\hat \vp}_2^2
=
2\,\mathrm{tr}(\mathrm{Cov}(\vpi))
=
2\frac{1-\norm{\hat \vp}_2^2}{\alphazero+1}.
\end{gather}
Therefore,
\begin{gather}
|R(\valpha,\vy)|
\le
\frac{G}{6}\cdot
2\frac{1-\norm{\hat \vp}_2^2}{\alphazero+1}
=
\frac{G}{3}
\frac{1-\norm{\hat \vp}_2^2}{\alphazero+1}
\le
\frac{G}{3(\alphazero+1)}.
\end{gather}
Thus,
\begin{gather}
R(\valpha,\vy)
=
O\!\left((\alphazero+1)^{-1}\right)
\end{gather} which proves the theorem.
\end{proof}

Next, we will restate the cross-entropy specific plug-in correction (Lemma \ref{lem:ce_plugin_correction-main}) and prove it.
\begin{lemma}[Cross-entropy plug-in correction]
\label{lem:ce_plugin_correction}
Let
\begin{gather}
\vpi\sim\Dir(\valpha),
\qquad
\valpha\in\R_{+}^{K},
\qquad
\alphazero=\sum_{i=1}^{K}\alpha_i,
\qquad
\hat \vp=\Piop(\valpha)
=
\frac{\valpha}{\alphazero}.
\end{gather}
Let \(\vy\in\{0,1\}^K\) be a one-hot target vector, and let
\(y\in\{1,\dots,K\}\) denote its target class index, so that \(y_j=1\) if
and only if \(j=y\). Equivalently,
\begin{gather}
p_y=\frac{\alpha_y}{\alphazero}.
\end{gather}
For the one-hot cross-entropy loss
\( \ell(\vpi,\vy)=-\log \pi_y \), if \(p_y\ge\delta>0\),
we have
\begin{gather}
\EE[-\log \pi_y]
=
-\log p_y
+
\Delta_{\mathrm{CE}}(\valpha,\vy) \le  -\log p_y
+
O_{\delta}(\alphazero^{-1}),
\end{gather}
where
\begin{gather}
\Delta_{\mathrm{CE}}(\valpha,\vy)
:=
\big(\psi(\alphazero)-\log\alphazero\big)
-
\big(\psi(\alpha_y)-\log\alpha_y\big),
\end{gather}
and \(\psi\) denotes the digamma function.

\end{lemma}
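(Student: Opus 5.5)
We would prove Lemma~\ref{lem:ce_plugin_correction} in three steps: derive an exact identity, substitute $\alpha_y=p_y\alphazero$ to isolate the plug-in term, and then bound the two digamma corrections uniformly.

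\textbf{Exact identity.} We start from the Dirichlet logarithmic moment. The marginal of $\pi_y$ under $\Dir(\valpha)$ is $\mathrm{Beta}(\alpha_y,\alphazero-\alpha_y)$. Hence $\EE[\log\pi_y]=\psi(\alpha_y)-\psi(\alphazero)$, which follows by differentiating $\log B(a,b)$ in $a$ (equivalently, the closed form in Eq.~\ref{eq:CE_EDL_loss}). Writing $\log p_y=\log\alpha_y-\log\alphazero$, we add and subtract $\log\alphazero$ and $\log\alpha_y$:
\[
\EE[-\log\pi_y]=\psi(\alphazero)-\psi(\alpha_y)=-\log p_y+\big(\psi(\alphazero)-\log\alphazero\big)-\big(\psi(\alpha_y)-\log\alpha_y\big).
\]
This is exactly $-\log p_y+\Delta_{\mathrm{CE}}(\valpha,\vy)$, so the equality part of the lemma is purely algebraic.

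\textbf{Bound on $\Delta_{\mathrm{CE}}$.} We use the standard two-sided digamma estimate
\[
\log t-\tfrac1t\le\psi(t)\le\log t-\tfrac1{2t},\qquad t>0.
\]
It follows from the integral representation $\psi(t)-\log t=-\int_0^\infty\big(\tfrac{1}{1-e^{-s}}-\tfrac1s\big)e^{-ts}\,ds$ together with $\tfrac12\le\tfrac{1}{1-e^{-s}}-\tfrac1s\le 1$. It gives $-\tfrac1t\le\psi(t)-\log t\le-\tfrac1{2t}<0$. Therefore:
\begin{itemize}
\item the first term satisfies $\psi(\alphazero)-\log\alphazero<0$;
\item the second term satisfies $-(\psi(\alpha_y)-\log\alpha_y)\le 1/\alpha_y$.
\end{itemize}
Combining these gives
\[
\Delta_{\mathrm{CE}}\le\frac1{\alpha_y}=\frac{1}{p_y\alphazero}\le\frac{1}{\delta\,\alphazero}.
\]
This is the claimed $O_\delta(\alphazero^{-1})$, with the explicit constant $1/\delta$. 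For the two-sided version in the main-text Lemma~\ref{lem:ce_plugin_correction-main}, the same inequalities give $|\Delta_{\mathrm{CE}}|\le \tfrac1{\alphazero}+\tfrac1{\alpha_y}\le(1+\delta^{-1})\alphazero^{-1}$.

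\textbf{Main obstacle.} The only real work is justifying the digamma estimate valid for all $t>0$, not just asymptotically. The small-$\alpha_y$ regime matters because $\alpha_y$ may be below $1$ when $\alphazero$ is small. The bound $1/\alpha_y$ still holds there, since the estimate is global. The interiority condition $p_y\ge\delta$ is exactly what converts the uncontrolled $1/\alpha_y$ into a multiple of $1/\alphazero$. If citing the integral representation is undesirable, we would instead use $\psi(t+1)=\psi(t)+1/t$ together with the asymptotic bound for $t\ge1$.
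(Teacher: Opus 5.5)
Your proof is correct and follows the paper's argument. The log-moment identity $\EE[\log\pi_y]=\psi(\alpha_y)-\psi(\alphazero)$ gives the exact decomposition, and the global bound $|\psi(t)-\log t|\le 1/t$ together with $\alpha_y=p_y\alphazero\ge\delta\alphazero$ yields the $O_\delta(\alphazero^{-1})$ correction; your only refinements are justifying that digamma estimate via the integral representation (the paper just quotes it as standard) and using the sign $\psi(\alphazero)-\log\alphazero<0$ to get the one-sided constant $1/\delta$ instead of the paper's $1+\delta^{-1}$.
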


\begin{proof}
For a Dirichlet random vector, the standard log-moment identity gives
\begin{gather}
\EE[\log \pi_y]
=
\psi(\alpha_y)-\psi(\alphazero).
\end{gather}
Therefore,
\begin{gather}
\EE[-\log \pi_y]
=
\psi(\alphazero)-\psi(\alpha_y).
\end{gather}
Since $p_y=\frac{\alpha_y}{\alphazero}$
the plug-in cross-entropy loss is
\begin{gather}
-\log p_y
=
-\log\left(\frac{\alpha_y}{\alphazero}\right)
=
\log\alphazero-\log\alpha_y.
\end{gather}
Subtracting the plug-in loss from the Dirichlet-expected loss gives
\begin{align}
\EE[-\log \pi_y]
-
(-\log p_y)
&=
\psi(\alphazero)-\psi(\alpha_y)
-
\log\alphazero
+
\log\alpha_y
\\
&=
\big(\psi(\alphazero)-\log\alphazero\big)
-
\big(\psi(\alpha_y)-\log\alpha_y\big)
=
\Delta_{\mathrm{CE}}(\valpha,\vy).
\end{align}
This proves the exact decomposition.

Using the standard bound
\begin{gather}
|\psi(t)-\log t|
\le
\frac{1}{t},
\qquad
t>0,
\end{gather}
we obtain
\begin{align}
|\Delta_{\mathrm{CE}}(\valpha,\vy)|
&\le
|\psi(\alphazero)-\log\alphazero|
+
|\psi(\alpha_y)-\log\alpha_y|
\\
&\le
\frac{1}{\alphazero}
+
\frac{1}{\alpha_y}.
\end{align}
Since
\( \alpha_y=\alphazero p_y, \)
this can be written as
\begin{gather}
|\Delta_{\mathrm{CE}}(\valpha,\vy)|
\le
\frac{1}{\alphazero}
+
\frac{1}{\alphazero p_y}.
\end{gather}
If \(p_y\ge\delta>0\), then
\begin{gather}
|\Delta_{\mathrm{CE}}(\valpha,\vy)|
\le
\frac{1}{\alphazero}
+
\frac{1}{\alphazero\delta}
=
\frac{1+\delta^{-1}}{\alphazero}.
\end{gather}
Thus,
\begin{gather}
\EE[-\log \pi_y]
=
-\log p_y
+
O_{\delta}(\alphazero^{-1}).
\end{gather}
\end{proof}

\section{Lipschitz plug-in approximation}
\label{app:proof_of_leme_lipschitz_plugin}
In this section, we will state the plug-in approximation for the class of Lipschitz functions, which in contrast to Theorem \ref{prop:explicit_plugin} does not require the loss to be differentiable.
\begin{lemma}[Lipschitz plug-in approximation]
\label{lem:lipschitz_plugin_edl}
Assume that for each label \(\vy\), the loss
\(\ell(\cdot,\vy):\Delta^{K-1}\to\mathbb{R}\) is \(L\)-Lipschitz with respect
to the Euclidean norm. Let
\(\boldsymbol{\pi}\sim\mathrm{Dir}(\boldsymbol{\alpha})\) and denote
\(\boldsymbol{p}=\Pi(\boldsymbol{\alpha})\). Then
\begin{equation}
\label{eq:lipschitz_plugin_edl}
\big|
\ell_{\mathrm{EDL}}(\valpha,\vy)
-
\ell_{\mathrm{plug}}(\valpha,\vy)
\big|
\le
L\sqrt{\frac{1-\|\boldsymbol{p}\|_2^2}{\alpha_0+1}}
\le
\frac{L}{\sqrt{\alpha_0+1}}.
\end{equation}
\end{lemma}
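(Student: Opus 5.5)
The argument uses the Lipschitz property once, then Jensen's inequality, and then the Dirichlet covariance identity already used in the proof of Theorem~\ref{lem:dirichlet_erm_loss}.

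\emph{Step 1 (reduce to a moment).} Since $\ell_{\mathrm{EDL}}(\valpha,\vy)=\EE[\ell(\vpi,\vy)]$ and $\ell_{\mathrm{plug}}(\valpha,\vy)=\ell(\vp,\vy)$ with $\vp=\Piop(\valpha)$ deterministic, I write the difference as $\EE[\ell(\vpi,\vy)-\ell(\vp,\vy)]$. Both $\vpi$ and $\vp$ lie in $\Delta^{K-1}$, so the $L$-Lipschitz assumption applies pointwise. Pulling the absolute value inside the expectation gives
\begin{gather}
\big|\ell_{\mathrm{EDL}}(\valpha,\vy)-\ell_{\mathrm{plug}}(\valpha,\vy)\big|
\le \EE\big|\ell(\vpi,\vy)-\ell(\vp,\vy)\big|
\le L\,\EE\norm{\vpi-\vp}_2 .
\end{gather}
Unlike the smooth case, no first-order cancellation is available here. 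This loss of cancellation is the reason the rate drops to a square root.

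\emph{Step 2 (Jensen and covariance).} By Jensen (or Cauchy--Schwarz), $\EE\norm{\vpi-\vp}_2\le\sqrt{\EE\norm{\vpi-\vp}_2^2}$. Because $\EE[\vpi]=\vp$, the second moment equals $\mathrm{tr}(\mathrm{Cov}(\vpi))$. The Dirichlet covariance formula from the proof of Theorem~\ref{lem:dirichlet_erm_loss}, $\mathrm{Cov}(\vpi)=\frac{1}{\alphazero+1}(\mathrm{Diag}(\vp)-\vp\vp^\top)$, then gives the trace
\begin{gather}
\mathrm{tr}(\mathrm{Cov}(\vpi))=\frac{\sum_i p_i-\norm{\vp}_2^2}{\alphazero+1}=\frac{1-\norm{\vp}_2^2}{\alphazero+1}.
\end{gather}
This yields the first inequality in~\eqref{eq:lipschitz_plugin_edl}.

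\emph{Step 3 (crude bound).} Since $\norm{\vp}_2^2\ge 0$ (indeed $\ge 1/K$), we have $1-\norm{\vp}_2^2\le 1$, which gives the second inequality $L/\sqrt{\alphazero+1}$.

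I do not expect a real obstacle. The only points needing care are measure-theoretic: $\ell(\vpi,\vy)$ must be integrable, which holds because a Lipschitz function on the compact simplex is bounded. The Lipschitz constant must also hold on all of $\Delta^{K-1}$, including the boundary where $\vpi$ may concentrate when some $\alpha_i<1$. The assumption as stated covers this. This is the key contrast with cross-entropy, which is not Lipschitz near the boundary and therefore needed the separate Lemma~\ref{lem:ce_plugin_correction}.
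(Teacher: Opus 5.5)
Your proof is correct and follows essentially the same route as the paper: the Lipschitz bound, then Jensen's inequality, then the identity $\EE\norm{\vpi-\vp}_2^2=\mathrm{tr}(\mathrm{Cov}(\vpi))=(1-\norm{\vp}_2^2)/(\alphazero+1)$, and finally $\norm{\vp}_2^2\ge 0$. The only cosmetic difference is that you take the trace of the full Dirichlet covariance matrix, whereas the paper sums the marginal variances $p_k(1-p_k)/(\alphazero+1)$ directly.
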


\begin{proof}
By the Lipschitz property of \(\ell\),
\begin{gather}
\big|\ell(\boldsymbol{\pi},\vy)-\ell(\boldsymbol{p},\vy)\big|
\le
L\|\boldsymbol{\pi}-\boldsymbol{p}\|_2 .
\end{gather}
Taking expectations and applying Jensen's inequality,
\begin{gather}
\big|
\ell_{\mathrm{EDL}}(\valpha,\vy)
-
\ell_{\mathrm{plug}}(\valpha,\vy)
\big| = \big|
\mathbb{E}_{\vpi\sim\Dir(\valpha)}[\ell(\boldsymbol{\pi},\vy)]-\ell(\boldsymbol{p},\vy)
\big|
\le
L\,\mathbb{E}\|\boldsymbol{\pi}-\boldsymbol{p}\|_2
\le
L\sqrt{\mathbb{E}\|\boldsymbol{\pi}-\boldsymbol{p}\|_2^2}.
\end{gather}
Since \(\boldsymbol{p}=\mathbb{E}[\boldsymbol{\pi}]\),
\begin{gather}
\mathbb{E}\|\boldsymbol{\pi}-\boldsymbol{p}\|_2^2
=
\mathrm{tr}(\mathrm{Cov}(\boldsymbol{\pi})).
\end{gather}
For a Dirichlet random vector,
\begin{gather}
\mathrm{Var}(\pi_k)=\frac{p_k(1-p_k)}{\alpha_0+1},
\end{gather}
hence
\begin{gather}
\mathrm{tr}(\mathrm{Cov}(\boldsymbol{\pi}))
=
\sum_{k=1}^K \frac{p_k(1-p_k)}{\alpha_0+1}
=
\frac{1-\|\boldsymbol{p}\|_2^2}{\alpha_0+1}.
\end{gather}
Substituting into the previous bound gives
\eqref{eq:lipschitz_plugin_edl}. The final inequality follows from
\(\|\boldsymbol{p}\|_2^2\ge 0\).
\end{proof}

\section{More experimental results and details}
\label{app:experimental_results}
\subsection{Experimental details}
\label{app:experimental_details}

We use the public Google Speech Commands v1 dataset (~\cite{warden2018speechcommands}),
released under the Creative Commons Attribution 4.0 license. Experiments are
implemented with NVIDIA NeMo (~\cite{kuchaiev2019nemo}), released under the
Apache 2.0 license, and use a MatchboxNet backbone following
\cite{majumdar2020matchboxnet}.

\paragraph{Hardware and software.}
Experiments are run on a workstation with an AMD Radeon RX 7900 XTX GPU and
an AMD Ryzen 7 7700 CPU under Windows 11 using WSL with Ubuntu 24.04. The
training environment uses ROCm 7.2.2 and NVIDIA NeMo for the MatchboxNet
implementation, preprocessing, and training pipeline.

\paragraph{Training setup.}
Following the MatchboxNet optimization setup
(\cite{majumdar2020matchboxnet}), all models are trained for \(200\) epochs
using NovoGrad with \(\beta_1=0.95\), \(\beta_2=0.5\), maximum learning rate
\(0.05\), and minimum learning rate \(10^{-3}\). We use a
warmup-hold-decay learning-rate schedule with \(5\%\) warmup, \(45\%\) hold,
and second-order polynomial decay for the remaining steps. Final runs use
batch size \(256\) on a single GPU. Weight decay is set to \(10^{-3}\) for models without KL regularization and to \(0\) for models trained with the KL regularizer, avoiding an additional
weight decay penalty on top of the explicit Dirichlet regularizer.

\paragraph{Preprocessing and augmentation.}
We follow the MatchboxNet preprocessing and augmentation pipeline
(\cite{majumdar2020matchboxnet}). Each waveform is converted into \(64\) MFCC
features computed from \(25\,\mathrm{ms}\) windows with \(10\,\mathrm{ms}\)
stride, and the temporal dimension is symmetrically zero-padded or cropped to
\(128\) feature vectors. During training, we apply random time shifts in
\([-5,5]\,\mathrm{ms}\), additive white noise with magnitude in
\([-90,-46]\,\mathrm{dB}\), SpecAugment with two time masks of width up to
\(25\) time steps and two frequency masks of width up to \(15\) frequency
bands, and SpecCutout with five rectangular masks using the same time and
frequency dimensions.

\paragraph{Runtime.}
A single \(200\)-epoch training run took approximately \(1\) hour on the
workstation with an AMD Radeon RX 7900 XTX GPU and an AMD Ryzen 7 7700 CPU.
The final experimental suite consisted of \(45\) independent training runs and
was completed in approximately \(22\)--\(24\) hours of wall-clock time by
running up to three jobs in parallel on the same GPU. Evaluation,
selective-prediction curve computation, and plotting were inexpensive compared
with training.

\subsection{Selective-prediction operating point selection}
\label{app:selective_prediction_details}

For each trained model and uncertainty score, we compute a
selective-prediction curve by sorting test samples in increasing uncertainty
and evaluating prefixes of this ordering. Each point on the curve corresponds
to a thresholded classifier that accepts samples below an uncertainty
threshold and withholds the remaining samples.

For a target thresholded accuracy \(a^\star\), we select the operating point
with maximum coverage among all thresholds satisfying
\[
\mathrm{Acc}_{\mathrm{th}} \ge a^\star - \varepsilon,
\qquad
\varepsilon=10^{-6}.
\]
The selected operating point is
\[
\tau^\star
\in
\arg\max_{\tau}
\mathrm{Coverage}(\tau)
\quad
\text{subject to}
\quad
\mathrm{Acc}_{\mathrm{th}}(\tau)
\ge
a^\star-\varepsilon .
\]
This asymmetric rule reflects the operational selective-prediction setting:
for a required reliability level, the goal is to retain as many samples as
possible. The tolerance \(\varepsilon\) is used only to avoid numerical
rounding effects.

For every selected operating point, we also verify the identity
\[
\mathrm{Acc}_{\mathrm{total}}
=
\mathrm{Acc}_{\mathrm{th}}\cdot \mathrm{Coverage},
\]
up to numerical precision.

\subsection{Accuracy vs coverage}

Figure~\ref{fig:appendix_entropy_threshold_curves} reports entropy-based
selective-prediction threshold curves for all model variants. For each run,
the normalized predictive-entropy threshold is varied, and the corresponding
coverage and thresholded accuracy are computed. Each panel therefore shows
directly how the operating point changes with the entropy threshold.
Stricter thresholds reject more samples, reducing coverage while typically
increasing the accuracy among the retained samples. The plots provide a
visual summary of this trade-off and of the variability across runs, whereas
the tables report the quantitative comparisons at selected operating points.
Solid lines denote the mean over five runs and shaded regions denote
\(\pm 2\sigma\).

\begin{figure}[!httbp]
\centering

\begin{minipage}{0.30\linewidth}
    \centering
    \includegraphics[width=\linewidth]{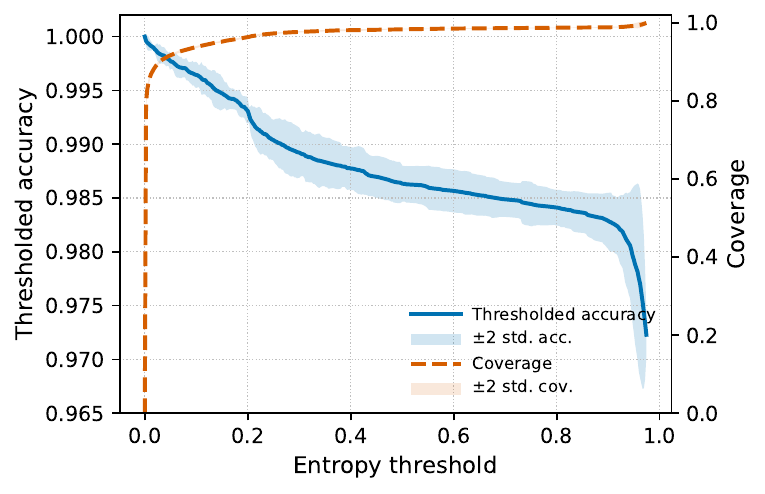}\\[-0.35em]
    {\scriptsize Softmax}
\end{minipage}
\hfill
\begin{minipage}{0.30\linewidth}
    \centering
    \includegraphics[width=\linewidth]{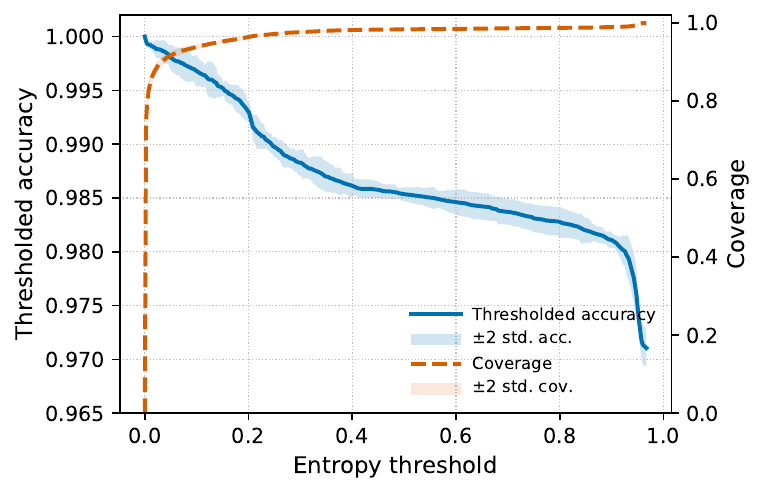}\\[-0.35em]
    {\scriptsize Softplus Proj.}
\end{minipage}
\hfill
\begin{minipage}{0.30\linewidth}
    \centering
    \includegraphics[width=\linewidth]{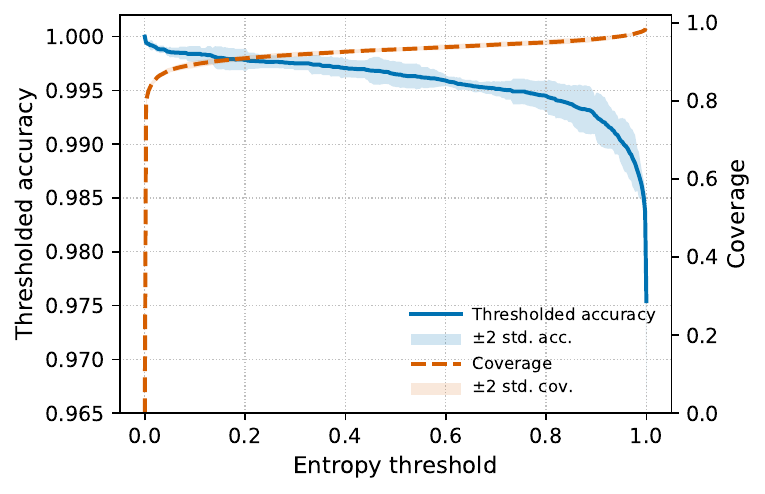}\\[-0.35em]
    {\scriptsize Softmax + KL}
\end{minipage}

\vspace{0.5em}

\begin{minipage}{0.30\linewidth}
    \centering
    \includegraphics[width=\linewidth]{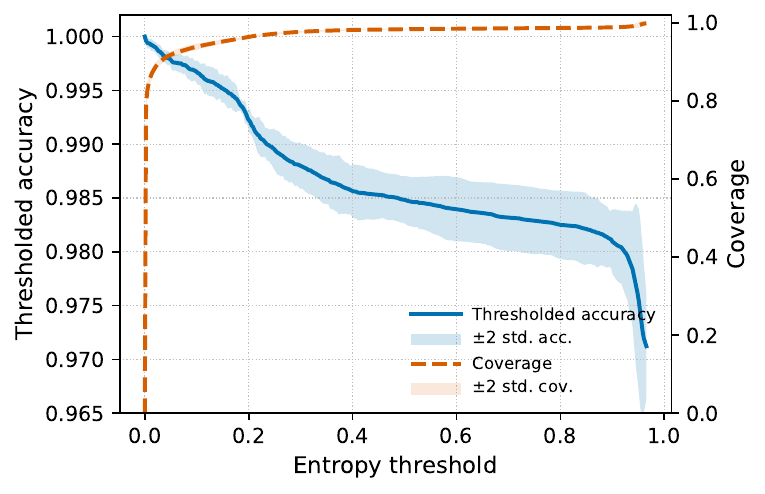}\\[-0.35em]
    {\scriptsize Softmax + EDL-CE}
\end{minipage}
\hfill
\begin{minipage}{0.30\linewidth}
    \centering
    \includegraphics[width=\linewidth]{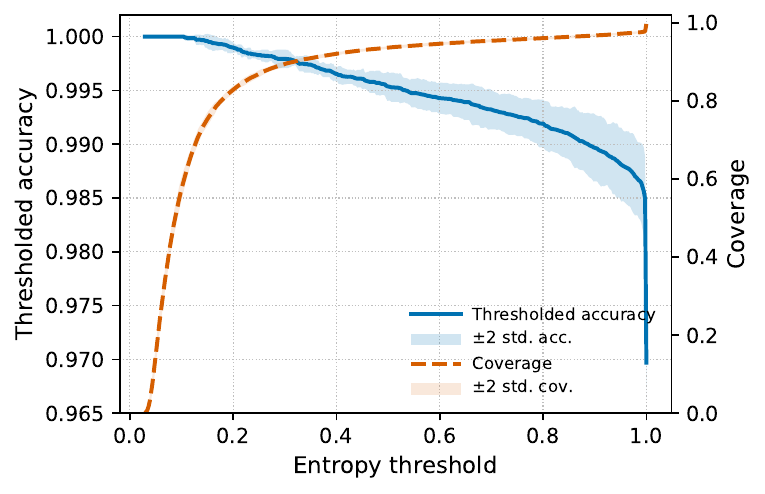}\\[-0.35em]
    {\scriptsize EDL-CE}
\end{minipage}
\hfill
\begin{minipage}{0.30\linewidth}
    \centering
    \includegraphics[width=\linewidth]{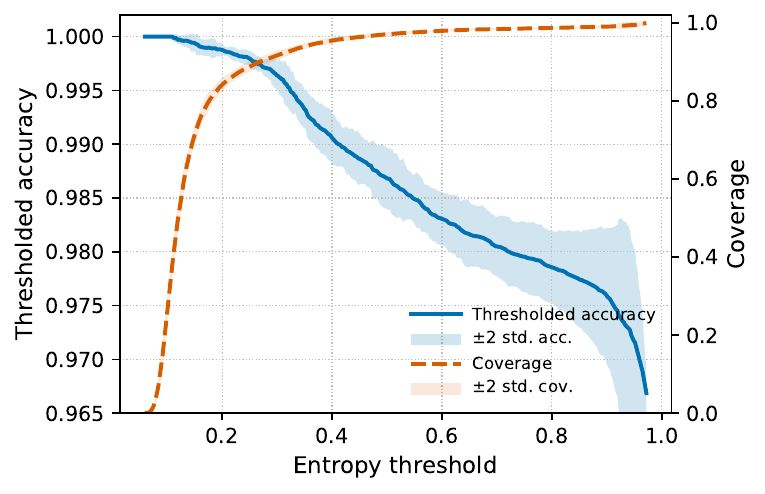}\\[-0.35em]
    {\scriptsize EDL-CE no KL}
\end{minipage}

\vspace{0.5em}

\begin{minipage}{0.30\linewidth}
    \centering
    \includegraphics[width=\linewidth]{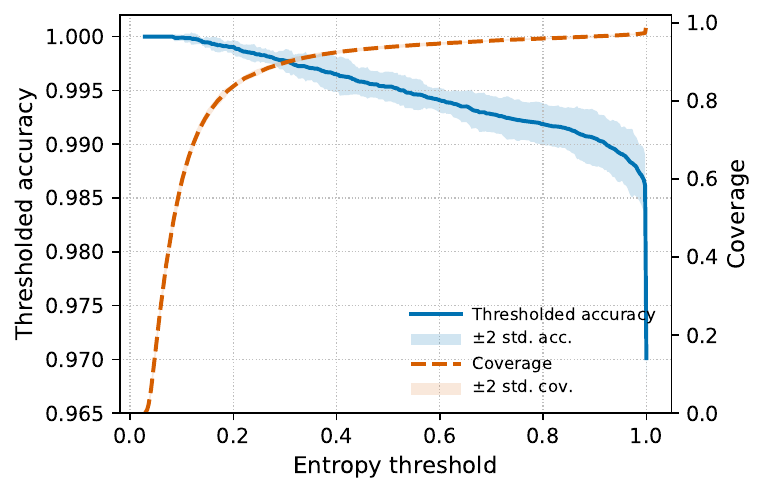}\\[-0.35em]
    {\scriptsize Plug-in EDL-CE}
\end{minipage}
\hfill
\begin{minipage}{0.30\linewidth}
    \centering
    \includegraphics[width=\linewidth]{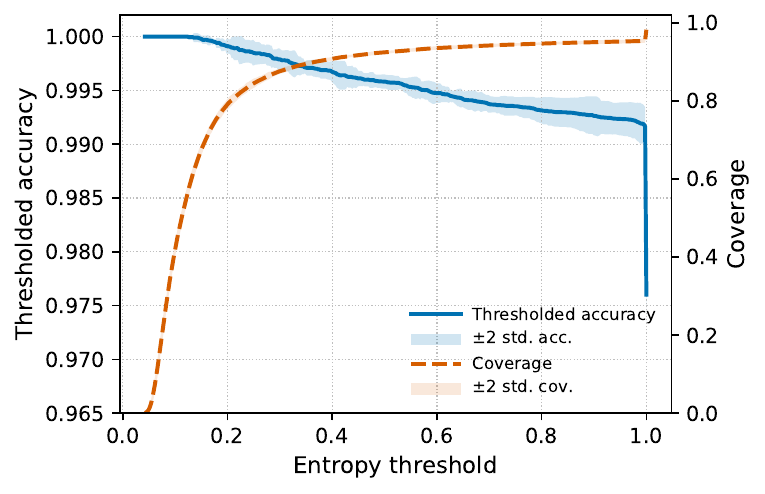}\\[-0.35em]
    {\scriptsize EDL-MSE}
\end{minipage}
\hfill
\begin{minipage}{0.30\linewidth}
    \centering
    \includegraphics[width=\linewidth]{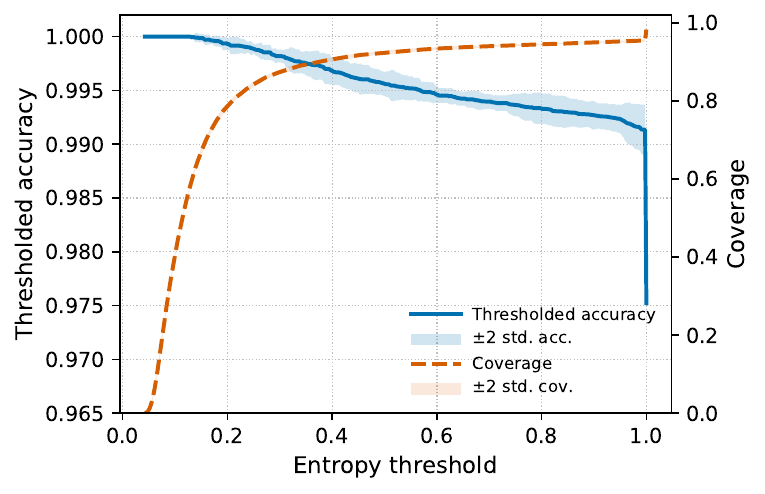}\\[-0.35em]
    {\scriptsize Plug-in EDL-MSE}
\end{minipage}

\caption{
Entropy-based selective-prediction threshold curves for all model variants on
GSC V1. Each panel shows thresholded accuracy and coverage as functions of the
normalized predictive-entropy threshold, averaged over five runs. The curves 
illustrate the selective-prediction trade-off: stricter uncertainty
thresholds reject more samples, reducing coverage while typically increasing
the accuracy among accepted samples. Solid lines show the mean
across runs and shaded regions denote \(\pm 2\sigma\). Quantitative
operating-point comparisons at target thresholded accuracies are reported in
the tables.
}
\label{fig:appendix_entropy_threshold_curves}
\end{figure}

\subsection{Additional uncertainty distributions}
\label{app:additional_kde}

Figures~\ref{fig:appendix_entropy_kde_all}
and~\ref{fig:appendix_vacuity_kde_remaining} show additional KDE plots of
the uncertainty scores for correctly and incorrectly classified test samples.
The entropy plots are shown for all model variants, while the vacuity plots
include the variants not shown in the main text. These figures are intended as
diagnostic visualizations of how each uncertainty score separates correct and
incorrect predictions; the selective-prediction tables provide the quantitative
comparison at fixed operating points.

\begin{figure}[!httbp]
\centering
\begin{minipage}{0.30\linewidth}
    \centering
    \includegraphics[width=\linewidth]{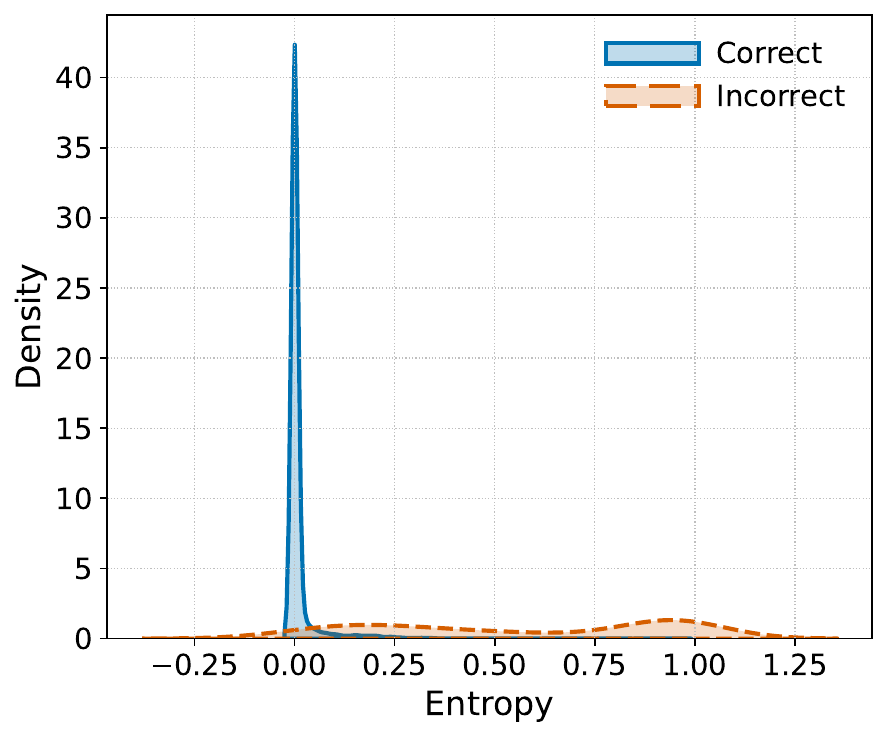}\\[-0.4em]
    {\scriptsize Softmax}
\end{minipage}
\hfill
\begin{minipage}{0.30\linewidth}
    \centering
    \includegraphics[width=\linewidth]{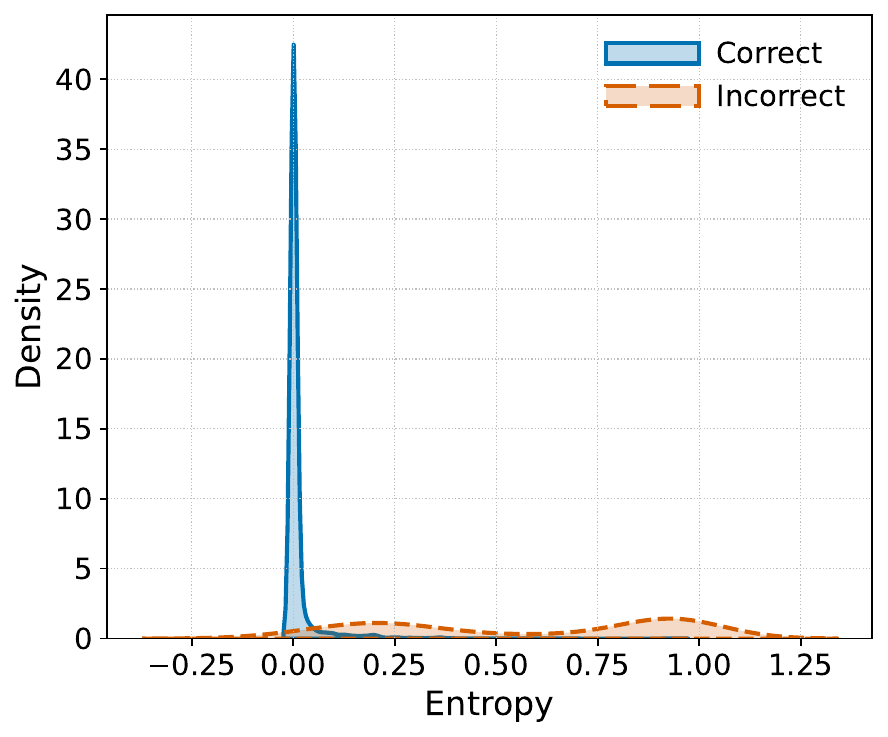}\\[-0.4em]
    {\scriptsize Softplus Proj.}
\end{minipage}
\hfill
\begin{minipage}{0.30\linewidth}
    \centering
    \includegraphics[width=\linewidth]{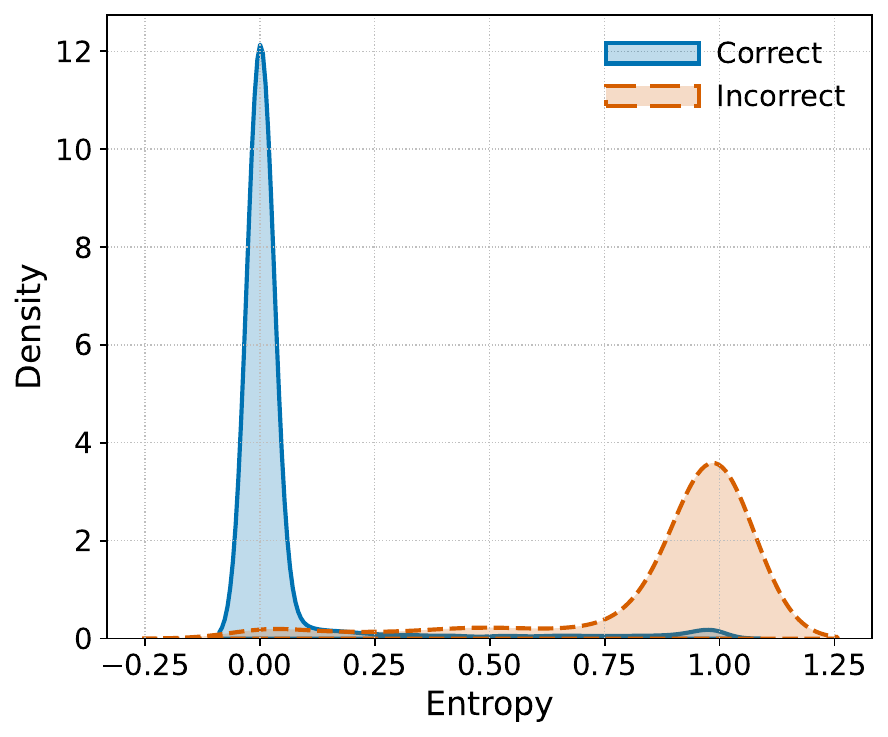}\\[-0.4em]
    {\scriptsize Softmax + KL}
\end{minipage}

\vspace{0.15em}

\begin{minipage}{0.30\linewidth}
    \centering
    \includegraphics[width=\linewidth]{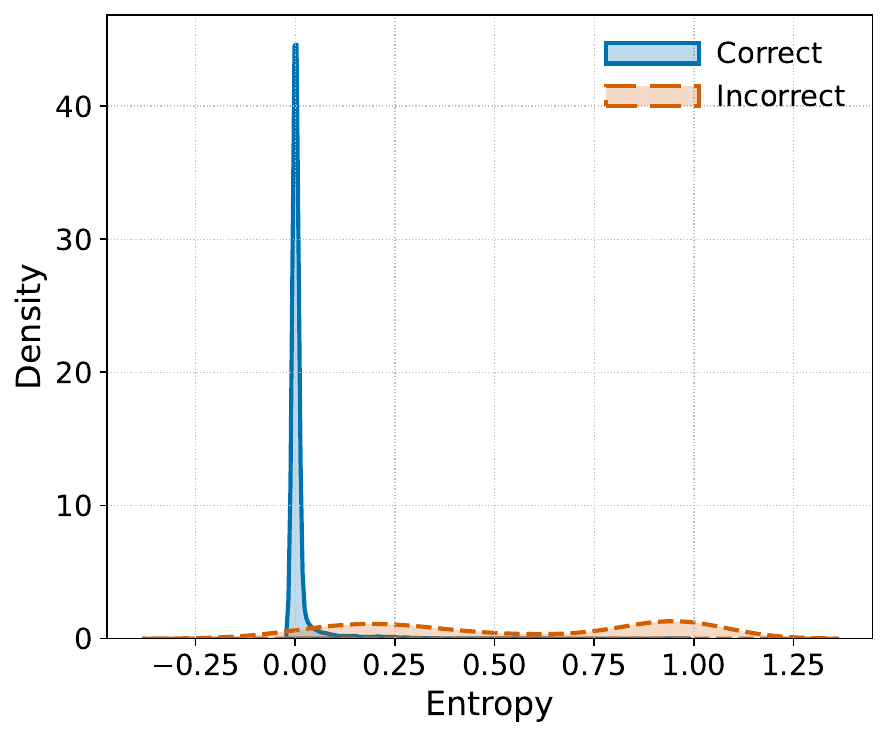}\\[-0.4em]
    {\scriptsize Softmax + EDL-CE}
\end{minipage}
\hfill
\begin{minipage}{0.30\linewidth}
    \centering
    \includegraphics[width=\linewidth]{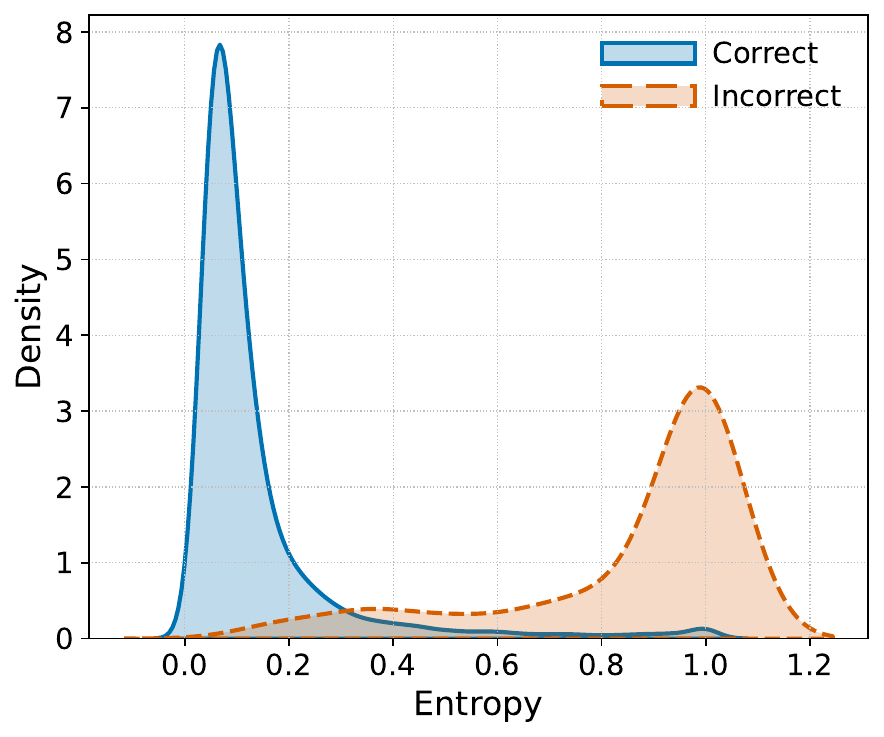}\\[-0.4em]
    {\scriptsize EDL-CE}
\end{minipage}
\hfill
\begin{minipage}{0.30\linewidth}
    \centering
    \includegraphics[width=\linewidth]{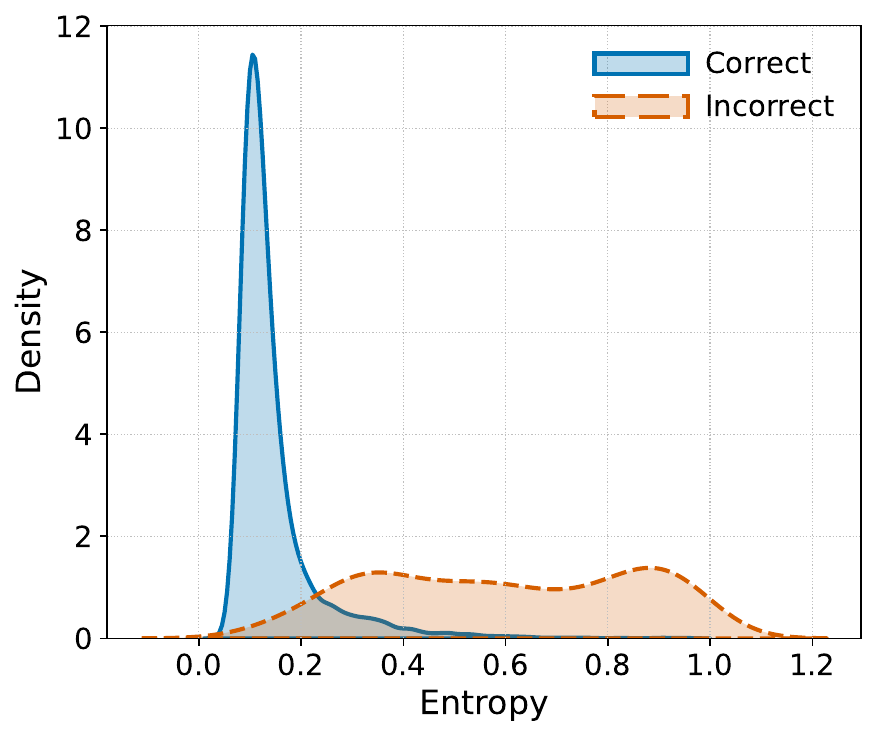}\\[-0.4em]
    {\scriptsize EDL-CE no KL}
\end{minipage}

\vspace{0.15em}

\begin{minipage}{0.30\linewidth}
    \centering
    \includegraphics[width=\linewidth]{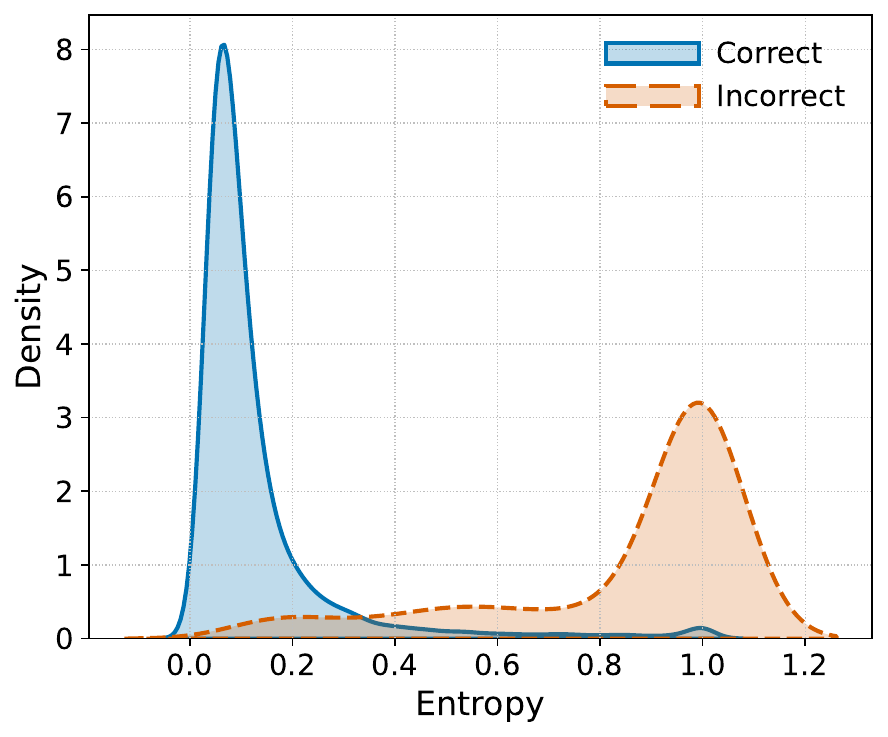}\\[-0.4em]
    {\scriptsize Plug-in EDL-CE}
\end{minipage}
\hfill
\begin{minipage}{0.30\linewidth}
    \centering
    \includegraphics[width=\linewidth]{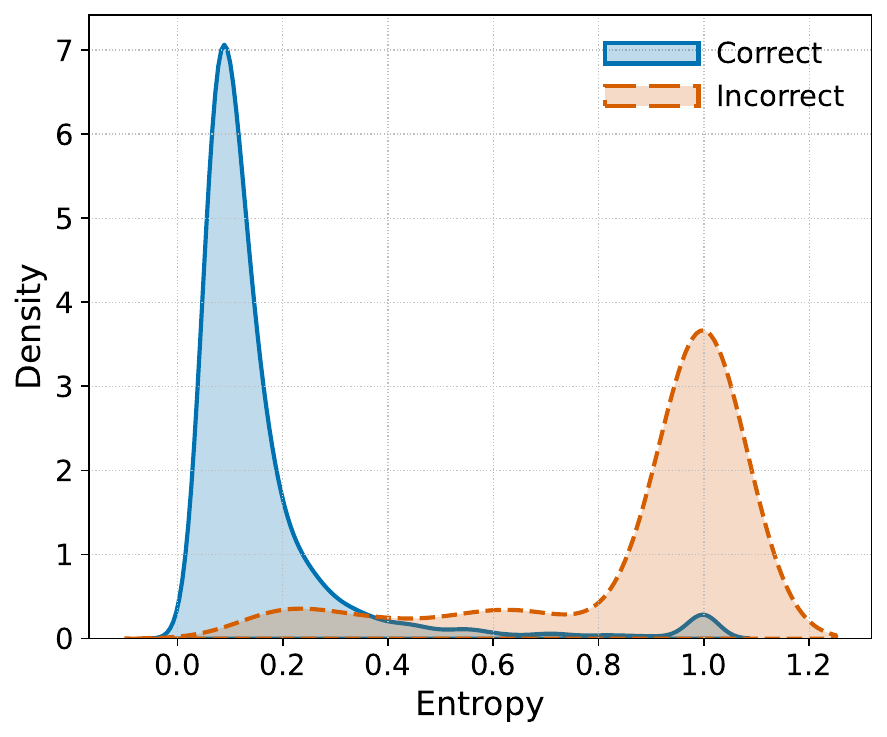}\\[-0.4em]
    {\scriptsize EDL-MSE}
\end{minipage}
\hfill
\begin{minipage}{0.30\linewidth}
    \centering
    \includegraphics[width=\linewidth]{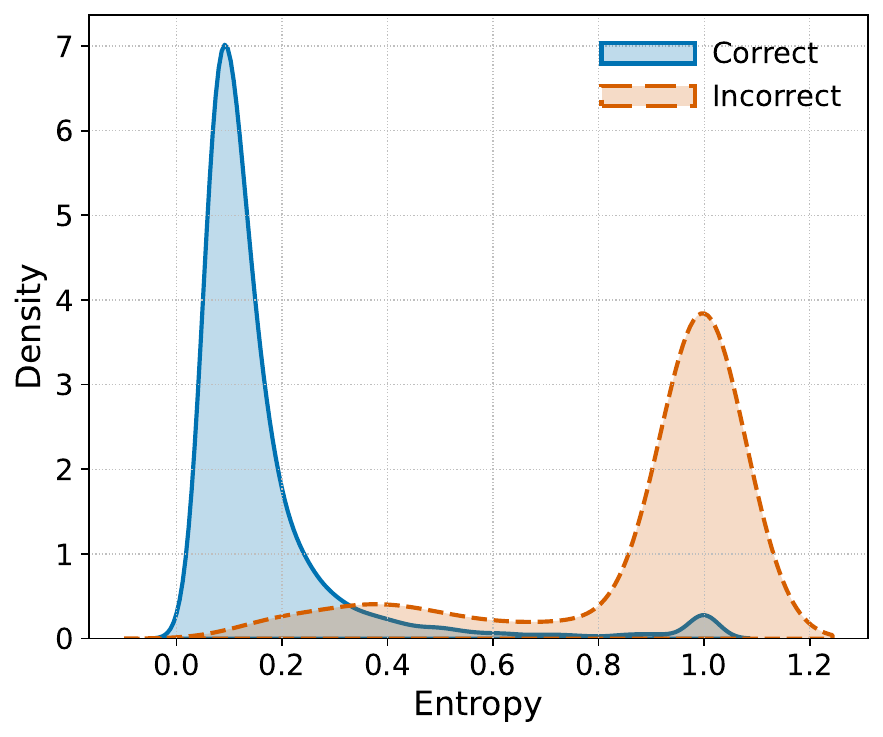}\\[-0.4em]
    {\scriptsize Plug-in EDL-MSE}
\end{minipage}
\caption{
Entropy KDE plots for all model variants on GSC V1. Each plot shows the
distribution of normalized predictive entropy for correctly and incorrectly
classified test samples. These plots are intended as diagnostic visualizations.
}
\label{fig:appendix_entropy_kde_all}

\end{figure}

\begin{figure}[!httbp]
\centering
\begin{minipage}{0.30\linewidth}
    \centering
    \includegraphics[width=\linewidth]{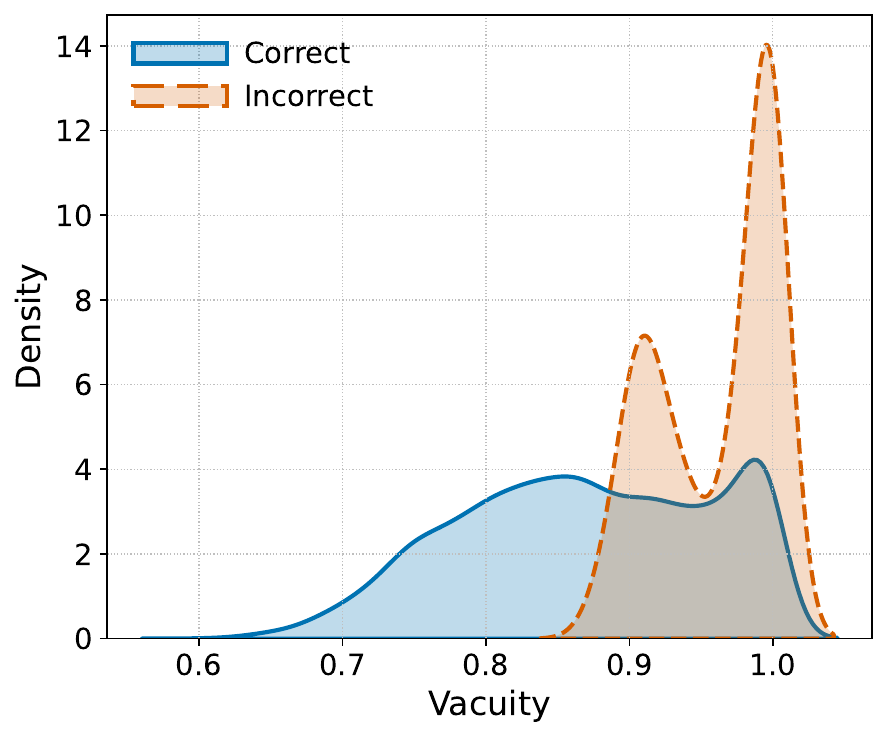}\\[-0.4em]
    {\scriptsize Softplus Proj.}
\end{minipage}
\hfill
\begin{minipage}{0.30\linewidth}
    \centering
    \includegraphics[width=\linewidth]{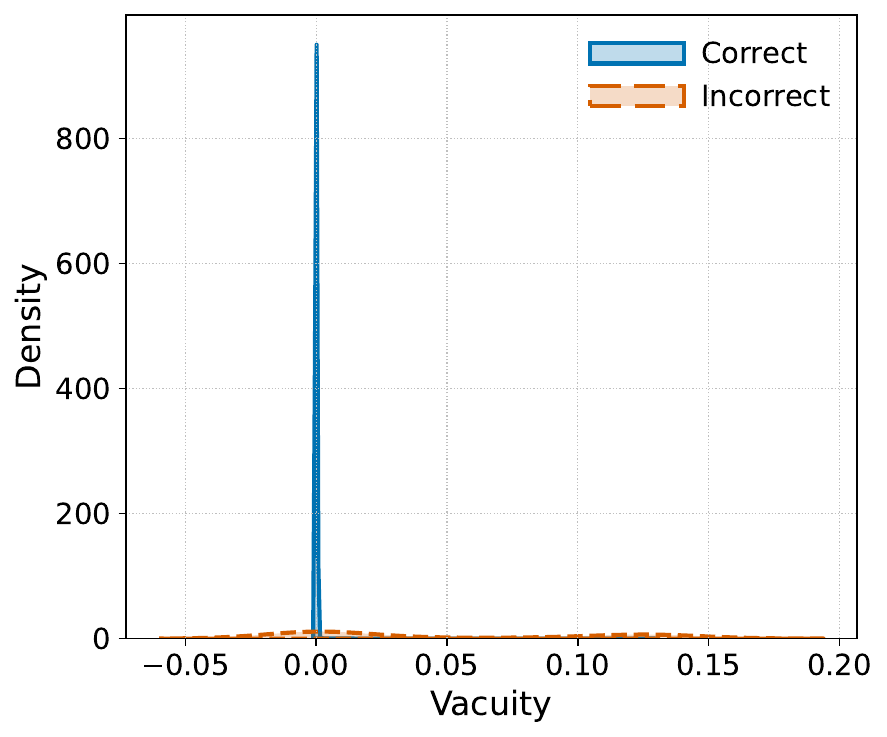}\\[-0.4em]
    {\scriptsize Softmax + EDL-CE}
\end{minipage}
\hfill
\begin{minipage}{0.30\linewidth}
    \centering
    \includegraphics[width=\linewidth]{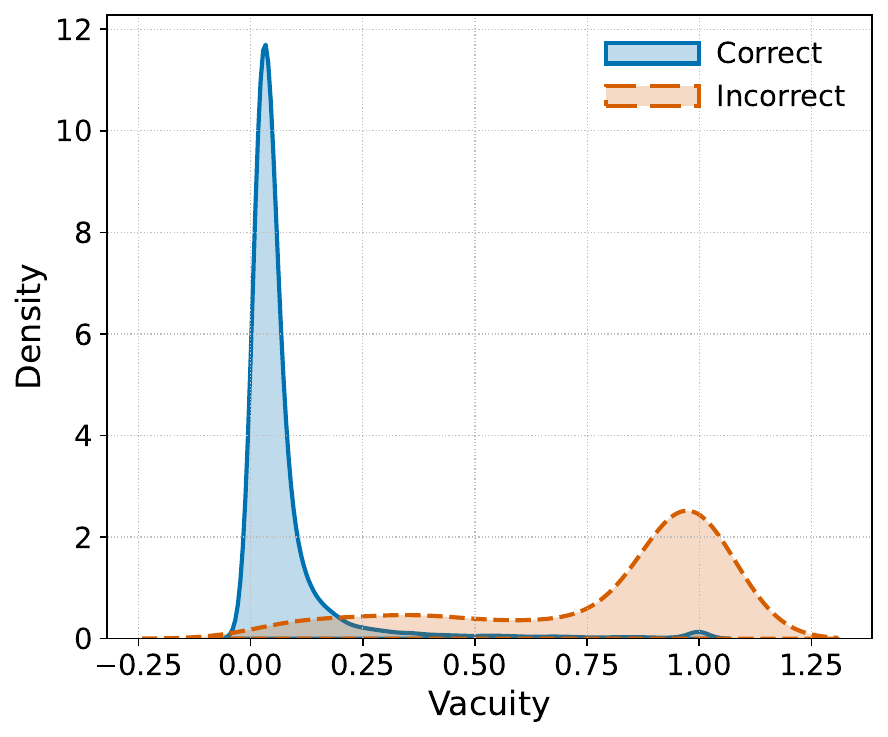}\\[-0.4em]
    {\scriptsize Plug-in EDL-CE}
\end{minipage}

\vspace{0.15em}

\hspace*{0.17\linewidth}
\begin{minipage}{0.30\linewidth}
    \centering
    \includegraphics[width=\linewidth]{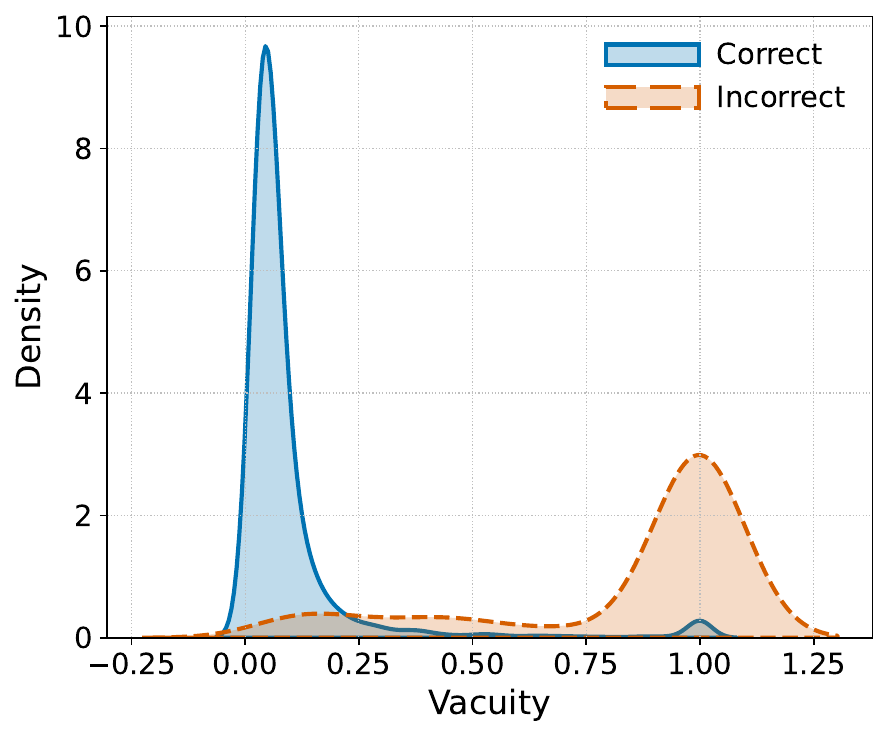}\\[-0.4em]
    {\scriptsize EDL-MSE}
\end{minipage}
\hfill
\begin{minipage}{0.30\linewidth}
    \centering
    \includegraphics[width=\linewidth]{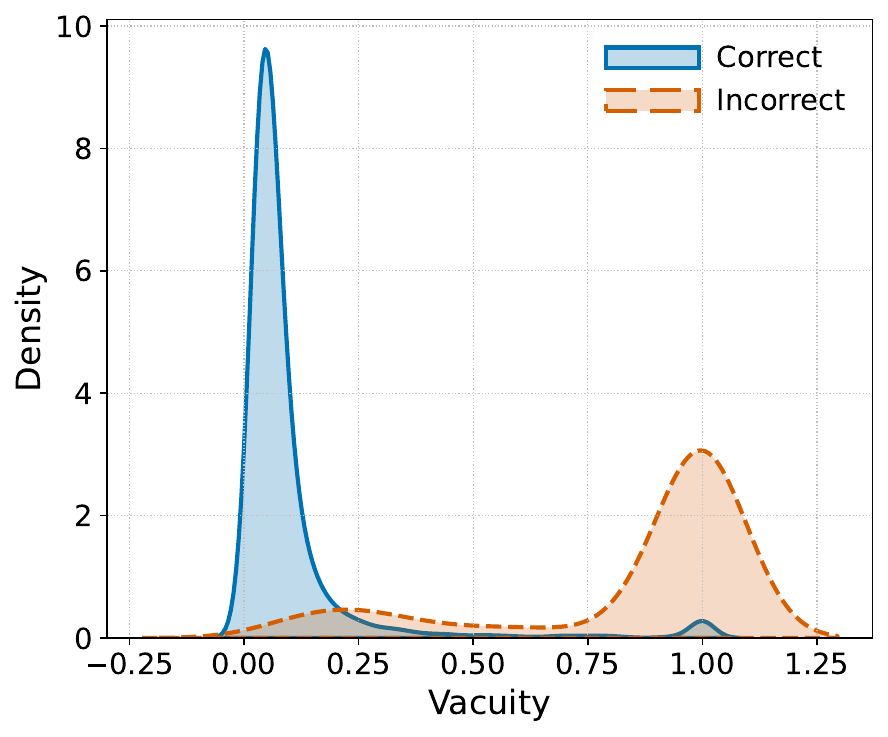}\\[-0.4em]
    {\scriptsize Plug-in EDL-MSE}
\end{minipage}
\hspace*{0.17\linewidth}
\caption{
Additional vacuity KDE plots for model variants not shown in the main text.
Each plot shows the distribution of vacuity for correctly and incorrectly
classified test samples. Together with the main-text vacuity KDEs, these plots
provide a visual diagnostic of how concentration-based uncertainty separates
correct and incorrect predictions.
}
\label{fig:appendix_vacuity_kde_remaining}

\end{figure}

\end{document}